\newtheorem{theorem}{Theorem}[section]
\newtheorem{lemma}[theorem]{Lemma}
\newtheorem{definition}[theorem]{Definition}
\def\z{{\mathbf{z}}}
\def\X{{\mathbf{X}}}
\def\Q{{\mathbf{Q}}}
\def\K{{\mathbf{K}}}
\def\V{{\mathbf{V}}}
\newcommand{\softmax}   {\ensuremath{\mathrm{softmax}}}
\def\R{{\mathbb R}}
\def\Nbb{{\mathbb N}}
\def\Wcal{{\mathcal W}}
\def\Ncal{{\mathcal N}}
\def\Gcal{{\mathcal G}}
\def\Fcal{{\mathcal F}}
\def\Ebb{{\mathbb E}}
\def\Pbb{{\mathbb P}}
\newcommand{\ie}{\textit{i.e.,\ }}
\newcommand{\eg}{\textit{e.g,\ }}
\newcommand{\method}{NeuralWalker}
\title{Learning Long Range Dependencies on Graphs \\ via Random Walks}
\author{%
  Dexiong Chen \qquad Till Hendrik Schulz \qquad Karsten Borgwardt \\
  Max Planck Institute of Biochemistry \\
  82152 Martinsried, Germany \\
  \texttt{\{dchen, tschulz, borgwardt\}@biochem.mpg.de} \\
}
\begin{document}

\maketitle

\begin{abstract}

    Message-passing graph neural networks (GNNs) excel at capturing local relationships but struggle with long-range dependencies in graphs. 
    In contrast, graph transformers (GTs) enable global information exchange but often oversimplify the graph structure by representing graphs as sets of fixed-length vectors. 
    This work introduces a novel architecture that overcomes the shortcomings of both approaches by combining the long-range information of random walks with local message passing. 
    By treating random walks as sequences, our architecture leverages recent advances in sequence models to effectively capture long-range dependencies within these walks.  
    Based on this concept, we propose a framework that offers
    (1) more expressive graph representations through random walk sequences, 
    (2) the ability to utilize any sequence model for capturing long-range dependencies, and 
    (3) the flexibility by integrating various GNN and GT architectures. 
    Our experimental evaluations demonstrate that our approach achieves significant performance improvements on 19 graph and node benchmark datasets, notably outperforming existing methods by up to 13\% on the PascalVoc-SP and COCO-SP datasets.

    \textbf{Code}: \url{https://github.com/BorgwardtLab/NeuralWalker}.
\end{abstract}

\section{Introduction}
Message-passing graph neural networks (GNNs)~\citep{gilmer2017neural} and graph transformers (GTs)~\citep{ying2021transformers}, have emerged as powerful tools for learning on graphs. While GNNs are efficient in identifying local relationships, they often fail to capture distant interactions due to the local nature of message passing, leading to issues such as over-smoothing~\citep{oono2020graph} and over-squashing~\citep{alon2021bottleneck}. In contrast, GTs~\citep{ying2021transformers,mialon2021graphit,chen2022structure,rampavsek2022recipe,shirzad2023exphormer} address these limitations by directly modeling long-range interactions through global attention mechanisms, enabling information exchange between all nodes. However, GTs typically preprocess the complex graph structure into fixed-length vectors for each node, using positional or structural encodings~\citep{rampavsek2022recipe}. This approach essentially treats the graph as a set of nodes enriched with these vectors. Such vector representations of graph topologies inevitably result in a loss of structural information, limiting expressivity even when GTs are combined with local message-passing techniques~\citep{zhu2023structural}. In this work, we address these limitations by introducing a novel architecture that captures long-range dependencies while preserving rich structural information, by leveraging the power of random walks.

Random walks offer a flexible approach to exploring graphs, surpassing the limitations of fixed-length vector representations. %
By traversing diverse paths across the graph, random walks can capture subgraphs with large diameters, such as cycles, which message passing often struggles to represent, due to its depth-first nature~\citep{grover2016node2vec}. 
More importantly, \emph{the complexity of sampling random walks is determined by their length and sample size rather than the overall size of the graph}. This characteristic makes random walks a scalable choice for representing large graphs, offering clear computational advantages compared to many computationally expensive encoding methods. 

While several graph learning approaches have employed random walks, their full potential remains largely untapped. Most existing approaches either focus solely on short walks~\citep{chen2020convolutional,nikolentzos2020random} or use walks primarily for structural encoding, neglecting the rich information they contain~\citep{dwivedi2021graph,mialon2021graphit}. A more recent method, CRaWL~\citep{tonshoff2023walking}, takes a novel approach by representing a graph as a set of random walks. While this approach shows promising results, it has two major practical limitations: 1) its reliance on convolutional layers to process random walks, particularly with small kernel sizes, constrains its ability to approximate arbitrary functions on walks and fully capture long-range dependencies within each walk. 2) Due to the depth-first nature of random walks, it struggles to efficiently capture local relationships, such as simple subtrees, as illustrated in Figure~\ref{fig:teaser}.

\begin{wrapfigure}{r}{.42\textwidth}
    \centering
    \includegraphics[width=.4\textwidth]{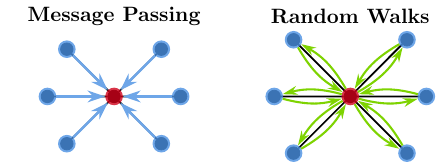}
    \caption{Message passing efficiently captures locally sparse subgraphs, like $k$-star subgraphs, while random walks struggle, requiring a length of $2k$.}
    \label{fig:teaser}
\end{wrapfigure}

Considering the limitations of existing random-walk-based models, we propose an approach that leverages the strengths of two complementary graph exploration paradigms. 
Our method combines the local neighborhood information captured by the \emph{breadth-first nature} of message passing with the long-range dependencies obtained through the \emph{depth-first} nature of random walks.
Unlike GTs~\citep{rampavsek2022recipe,chen2022structure} which encode random walks into fixed-length vectors, our approach preserves their sequential nature, thereby retaining richer structural information. Our proposed architecture, named \emph{NeuralWalker}, achieves this by processing sets of sampled random walks using powerful sequence models. 
We then employ local (and optionally global) message passing to capture complementary information. Multiple alternations of these two operations are stacked to form our model. A key innovation of our approach is the utilization of long sequence models, such as state space models, to learn from random walk sequences. To the best of our knowledge, this is the first application of such models in this context. 

Our contributions are summarized as follows. i) We propose a novel framework that leverages both random walks and message passing, leading to provably more expressive graph representation. ii) Our model exploits advances in sequence modeling (\eg transformers and state space models) to capture long-range dependencies within the walks. iii) Our message-passing block can seamlessly integrate various GNN and GT architectures, allowing for customization based on specific tasks. iv) We conduct extensive ablation studies to offer practical insights for choosing the optimal sequence layer types and message-passing strategies. Notably, the trade-off between model complexity and expressivity can be flexibly controlled by adjusting walk sampling rate and length, making our model scalable to graphs with up to 1.6M nodes. v) Our model demonstrates remarkable performance improvements over existing methods on a comprehensive set of 19 benchmark datasets.

\section{Related Work}
\paragraph{Local and global message passing.}
Message-passing neural networks (MPNNs) are a cornerstone of graph learning. They propagate information between nodes, categorized as either local or global methods based on the propagation range. Local MPNNs, also known as GNNs (\eg GCN~\citep{kipf2016semi}, GIN~\citep{xu2019powerful}), excel at capturing local relationships but struggle with distant interactions due to limitations like over-smoothing~\citep{oono2020graph} and over-squashing~\citep{alon2021bottleneck}. Global message passing offers a solution by modeling long-range dependencies through information exchange across all nodes. GTs~\citep{ying2021transformers,kreuzer2021rethinking,mialon2021graphit,chen2022structure,rampavsek2022recipe,shirzad2023exphormer}, using global attention mechanisms, are a prominent example. However, GTs achieve this by compressing the graph structure into fixed-length vectors, leading to a loss of rich structural information. Alternative techniques include the virtual node approach~\citep{gilmer2017neural,barcelo2020logical}, which enables information exchange between distant nodes by introducing an intermediary virtual node.

\paragraph{Random walks for graph learning.}
Random walks have a long history in graph learning, particularly within traditional graph kernels. Due to the computational intractability of subgraph or path kernels~\citep{gartner2003graph}, walk kernels~\citep{gartner2003graph,kashima2003marginalized,borgwardt2005shortest} were introduced to compare common walks or paths in two graphs efficiently. Non-backtracking walks have also been explored~\citep{mahe2005graph} for molecular graphs. In deep graph learning, several approaches utilize walks or paths to enhance GNN expressivity. GCKN~\citep{chen2020convolutional} pioneered short walk and path feature aggregation within graph convolution, further explored in~\citet{michel2023path}. RWGNN~\citep{nikolentzos2020random} leverages differentiable walk kernels for subgraph comparison and parametrized anchor graphs. The closest work to ours is CRaWL~\citep{tonshoff2023did}. However, it lacks message passing and relies on a convolutional layer, particularly with small kernel sizes, limiting its universality. Additionally, random walks have been used as structural encoding in GTs such as RWPE~\citep{dwivedi2021graph} and relative positional encoding in self-attention~\citep{mialon2021graphit}.
\paragraph{Sequence modeling.}
Sequence models, particularly transformers~\citep{vaswani2017attention} and state space models (SSMs)~\citep{gu2021efficiently,gu2023mamba}, have become instrumental in natural language processing~(NLP) and audio processing due to their ability to capture long-range dependencies within sequential data. However, directly leveraging these models on graphs remains challenging due to the inherent structural differences. Existing approaches like GTs treat graphs as sets of nodes, hindering the application of transformer architectures to sequences within the graph. Similarly, recent work utilizing SSMs for graph modeling~\citep{wang2024graph,behrouz2024graph} relies on node ordering based on degrees, a suboptimal strategy that may introduce biases or artifacts when creating these artificial sequences that do not reflect the underlying graph topology.

Our work addresses this limitation by explicitly treating random walks on graphs as sequences. This allows us to leverage the power of state-of-the-art~(SOTA) sequence models to capture rich structural information within these walks, ultimately leading to a more universal graph representation. Furthermore, by integrating both message passing and random walks, our model is provably more expressive compared to existing MPNNs and random walk-based models, as discussed in Section~\ref{sec:theory}.

\section{Neural Walker}
In this section, we present the architecture of our proposed NeuralWalker, which processes sequences obtained from random walks to produce both node and graph representations. 
Its components consist of a random walk sampler, described in Section~\ref{sec:walk_sampler}, and a stack of neural walker blocks, discussed in Section~\ref{sec:architecture}. 
A visualization of the architecture can be found in Figure~\ref{fig:overview}.

\begin{figure}[t]
    \centering
    \includegraphics[width=\textwidth]{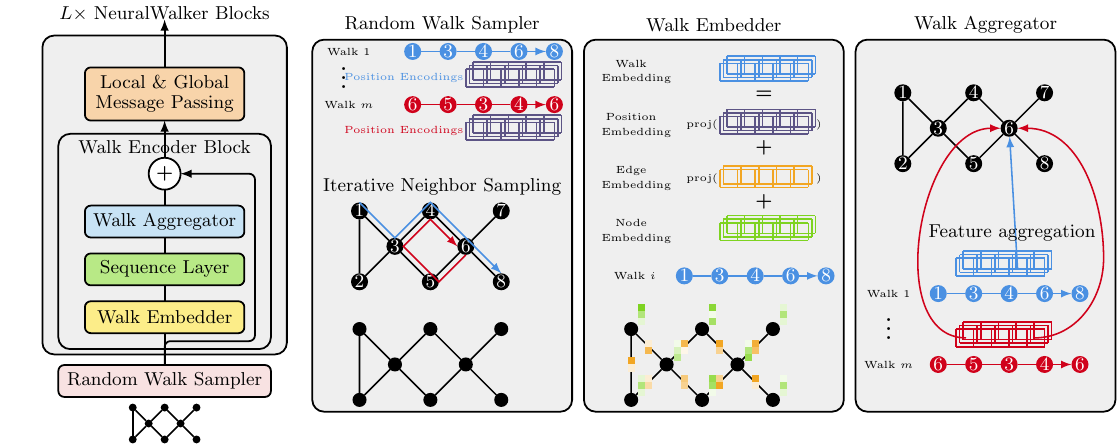}
    \caption{Overview of the NeuralWalker architecture. The random walk sampler samples $m$ random walks independently without replacement; the walk embedder computes walk embeddings given the node/edge embeddings at the current layer; the walk aggregator aggregates walk features into the node features via pooling of the node features encountered in all the walks passing through the node.}
    \label{fig:overview}
\end{figure}

\subsection{Notation and Random Walks on Graphs}\label{sec:notation}
We first introduce the necessary notation. 
A graph is a tuple $G=(V, E, x, z)$, where $V$ is the set of nodes, $E$ is the set of edges, and $x:V\to \R^{d}$ and $z:E\to\R^{d'}$ denote the functions assigning attributes to node and edges, respectively. 
We denote by $\Gcal$ and $\Gcal_n$ the space of all graphs and the space of graphs up to size $n$, respectively.
The neighborhood of a node $v$ is denoted by $\Ncal(v)$ and its degree by $d(v)$. 
A walk $W$ of length $\ell$ on a graph $G$ is a sequence of nodes connected by edges, i.e. $W=(w_0,\dots,w_{\ell})\in V^{\ell+1}$ such that $w_{i-1} w_{i}\in E$ for all $i \in [\ell]$. We denote by $\Wcal(G)$ and $\Wcal_{\ell}(G)$ the set of all walks and all walks of length $\ell$ on $G$, respectively. 
$W$ is called non-backtracking if $w_{i-1}\neq w_{i+1}$ for all $i$ and we denote the set of all such walks by $\Wcal_{\ell}^{\mathrm{nb}}(G)$.
A random walk is a Markov chain that starts with some distribution on nodes $P_0(v)$ and transitions correspond to moving to a neighbor chosen uniformly at random. %
For non-backtracking random walks, neighbors are chosen uniformly from $\Ncal(w_i)\backslash \{w_{i-1}\}$. 
We denote by $P(\Wcal(G), P_0)$ the distribution of random walks with initial distribution $P_0$, and by $P(\Wcal(G))$ the case where $P_0=\mathbb{U}(V)$ is the uniform distribution on $V$. 

\subsection{Random Walk Sampler}\label{sec:walk_sampler}
The random walk sampler independently samples a subset of random walks on each graph through a probability distribution on all possible random walks. 
For any distribution on random walks $P(\Wcal(G),P_0)$, we denote by $P_m(\Wcal(G)):=\{W_1,\dots,W_m\}$ a realization of $m$ i.i.d.\ samples $W_j\sim P(\Wcal(G), P_0)$. Our model is always operating on such realizations. 
Motivated by the successful application in \citet{tonshoff2023walking} and the halting issue in general random walks of arbitrary length~\citep{sugiyama2015halting}, we consider non-backtracking walks of fixed length. 
Specifically, we consider the uniform distribution of length-$\ell$ random walks $P(\Wcal(G), P_0):=P(\Wcal_{\ell}^{\mathrm{nb}}(G), \mathbb{U}(V))$.
Note that one could also consider a stationary initial distribution $P_0(v)=\nicefrac{d(v)}{2|E|}$ for better theoretical properties~\citep{lovasz1993random}.

In practice, we restrict the number of samples $m\leq n$ where $n=|V|$ for computation efficiency. 
We define the sampling rate of random walks as the ratio of random walks to nodes ($\gamma:=\nicefrac{m}{n}$). 
Note that random walks only need to be sampled once for each forward pass and that an efficient CPU implementation can be achieved through iterative neighbor sampling, \emph{with a complexity $O(n\gamma \ell)$, linear in the number and length of random walks}. 
We remark that during inference, a higher sampling rate than that used during training can be used to enhance performance. Therefore, we always fix it to 1.0 at inference.
In Section~\ref{sec:ablation_studies}, we empirically study the impact of $\gamma$ and $\ell$ used at training on the performance, showing that once these hyperparameters exceed a certain threshold, their impact on performance saturates.
\paragraph{Positional encodings for random walks.}
Similar to \cite{tonshoff2023walking}, we utilize additional encoding features that store connectivity information captured within random walks. 
In particular, we consider an identity encoding which encodes whether two nodes in a walk are identical within a window and an adjacency encoding which includes information about subgraphs induced by nodes along the walk. 
Specifically, for a walk $W=(w_0,\dots,w_\ell)\in\Wcal_{\ell}(G)$ and window size $s\in\Nbb_{+}$, the identity encoding $W$, denoted $\mathrm{id}_W^s$, is the binary matrix in $\{0,1\}^{(\ell+1)\times s}$ with
$\mathrm{id}_{W}^s[i,j]=1$ if $w_i=w_{i-j-1}$ s.t. $i-j\geq 1$, and otherwise $0$ for any $0\leq i\leq \ell$ and $0\leq j\leq s-1$. 
Similarly, the adjacency encoding $\mathrm{adj}_W^s\in \{0,1\}^{(\ell+1)\times (s-1)}$ satisfies
$\mathrm{adj}_W^s[i,j]=1$ if $w_i w_{i-j-1}\in E$ s.t. $i-j\geq 1$, and otherwise $0$ for any $0\leq i\leq \ell$ and $0\leq j\leq s-1$. 
A visual example of such encodings is given in Appendix~\ref{app:sec:positional_encoding}.
Finally, the output of the random walk sampler is the concatenation all encodings into a single matrix $h_{\text{pe}}\in\R^{(\ell+1)\times d_{\text{pe}}}$ together with the sampled random walks.

\subsection{Model Architecture}\label{sec:architecture}
In the following, we describe the architecture of NeuralWalker which conists of several walk encoder blocks where each block is comprised of three components: 
a walk embedder, a sequence layer, and a walk aggregator that are presented in Sections~\ref{sec:walk_embedder}, \ref{sec:sequence_layer}, and \ref{sec:walk_aggregator}, respectively. 
\subsubsection{Walk Embedder}\label{sec:walk_embedder}
The walk embedder generates walk embeddings given the sampled walks, and the node and edge embeddings at the current layer. It is defined as a function $f_{\text{emb}}: \Wcal_{\ell}(G)$. Specifically, for any sampled walk $W\in P_m(\Wcal_{\ell}(G))$, the walk embedding $h_W:=f_{\text{emb}}(W)\in \R^{(\ell+1)\times d}$ is defined as
\begin{equation}\label{eqn:walk_embedding}
    h_W[i]:=h_V(w_i)+\mathrm{proj}_{\text{edge}} (h_E(w_i w_{i+1}))+ \mathrm{proj}_{\text{pe}}(h_{\text{pe}}[i]),
\end{equation}
where $h_V: V\to \R^d$ and $h_E: E\to \R^{d_{\text{edge}}}$ are node and edge embeddings at the current block and $\mathrm{proj}_{\text{edge}}: \R^{d_{\text{edge}}}\to \R^d$ and $\mathrm{proj}_{\text{pe}}: \R^{d_{\text{pe}}}\to\R^d$ are some trainable projection functions. The resulting walk embeddings is then processed with a sequence model as discussed below.

\subsubsection{Sequence Layer on Walk Embeddings}\label{sec:sequence_layer}
In principle, any sequence model can be used to process the walk embeddings obtained above. 
A sequence layer transforms a sequence of feature vectors into a new sequence, \ie it is a function $f_{\text{seq}}: \R^{(\ell+1)\times d}\to \R^{(\ell+1)\times d}$. 
In the following, we discuss several choices for such a function.

1D CNNs are simple and fast models for processing sequences, also used in~\cite{tonshoff2023walking}. 
They are GPU-friendly and require relatively limited memory. However, the receptive field of a 1D CNN is limited by its kernel size, which might fail to capture distant dependencies on long walks.

Transformers are widely used in modeling sequences and graphs due to their universality and strong performance. However, we found in our experiments (see Table~\ref{tab:ablation}) that they are suboptimal encoders for walk embeddings, even when equipped with the latest techniques like RoPE~\citep{su2024roformer}.

SSMs are a more recent approach for modeling long sequences. In our experiments, we employ two of the latest instances of SSMs, namely S4~\citep{gu2021efficiently} and Mamba~\citep{gu2023mamba}. In addition to the original version, we consider the bidirectional version of Mamba~\citep{zhu2024vision}. We found that bidirectional Mamba consistently outperforms other options (Section~\ref{sec:ablation_studies}). 
For a more comprehensive background on SSMs, please refer to Appendix~\ref{app:sec:ssm}.

\subsubsection{Walk Aggregator}\label{sec:walk_aggregator}
The walk aggregator aggregates walk features into node features such that the resulting node features encode context information from all walks passing through that node. 
It is defined as a function $f_{\text{agg}}: (P_m(\Wcal_{\ell}(G)) \to \R^{(\ell+1)\times d}) \to (V \to \R^d)$ and the resulting node feature mapping is given by $h_V^{\text{agg}}:=f_{\text{agg}}(f_{\text{seq}}(f_{\text{emb}}|_{P_m(\Wcal_{\ell}(G)))}))$ where $f|_{.}$ denotes the function restriction. 
In this work, we consider the average of all the node features encountered in the walks passing through a given node. 
Specifically, the node feature mapping $h_V^{\text{agg}}$ with an average pooling is defined as
\begin{equation}\label{eqn:walk_agg}
    h_V^{\text{agg}}(v):=\frac{1}{N_v(P_m(\Wcal_{\ell}(G)))}\sum_{W\in P_m(\Wcal_{\ell}(G))} \sum_{w_i\in W st.\ w_i=v} f_{\text{seq}}(h_W)[i],
\end{equation}
where $N_v(P_m(\Wcal_{\ell}(G)))$
represents the number of occurrences of $v$ in the union of walks in $P_m(\Wcal_{\ell}(G))$.
One could also average the edge features in the walks passing through a certain edge to update the edge features: $h_E^{\text{agg}}(e):=\sum_{W\in P_m(\Wcal_{\ell}(G))} \sum_{w_iw_{i+1}\in W st.\ w_iw_{i+1}=e} f_{\text{seq}}(h_W)[i]$ up to a normalization factor. 
In practice, we also use skip connections to keep track of the node features from previous layers.

\subsubsection{Local and Global Message Passing}\label{sec:message_passing}
While random walks are efficient at identifying long-range dependencies due of their depth-first nature, they are less suited for capturing local substructure information, which often plays an essential role in many graph learning tasks.
To address this limitation, we draw inspiration from classic node embedding methods~\citep{perozzi2014deepwalk,grover2016node2vec}.
We incorporate a message-passing layer into our encoder block, leveraging its breadth-first characteristics to complement the information obtained through random walks.
Such a (local) message passing step is given by
\begin{equation}
    h_V^{\text{mp}}(v):=h_V^{\text{agg}}(v)+\mathrm{MPNN}(G, h_V^{\text{agg}}(v)),
\end{equation}
where $\mathrm{MPNN}$ denotes a GNN model, typically with one layer in each encoder block.

Following the local message passing layer, we optionally apply a global message passing, allowing for a global information exchange, as done in GTs~\citep{chen2022structure}. 
We particularly consider two global message passing techniques, namely virtual node~\citep{gilmer2017neural,tonshoff2023did} and transformer layer~\citep{ying2021transformers,chen2022structure,rampavsek2022recipe}. We provide more details on these techniques in Appendix~\ref{app:sec:global_mp}.

\section{Theoretical Results}\label{sec:theory}
In this section, we investigate the theoretical properties of NeuralWalker.
The proofs of the following results as well as more background can be found in Appendix~\ref{app:sec:theory}. 

We first define walk feature vectors following \cite{tonshoff2023walking}:
\begin{definition}[Walk feature vector]\label{def:walk_feature_vector}
    For any graph $G=(V, E,x,z)$ and $W\in \Wcal_{\ell}(G)$, the walk feature vector $X_W$ of $W$ is defined by concatenating the node and edge feature vectors, along with the positional encodings of $W$ with window size $s=\ell$. Formally,
    \begin{equation*}
        X_W=(x(w_i),z(w_i w_{i+1}),h_{\text{pe}}[i])_{i=0,\dots,\ell}\in \R^{(\ell+1)\times d_{\text{walk}}},
    \end{equation*}
    where $h_{\text{pe}}$ represents the positional encoding of Section~\ref{sec:walk_sampler}, $z(w_\ell w_{\ell+1})=0$, and $d_{\text{walk}}:=d+d'+d_{\text{pe}}$. For simplicity, we still denote the distribution of walk feature vectors on $G$ by $P(\Wcal(G))$.
\end{definition}
For simplicitly, we consider general rather than non-backtracking random walks in this section. 
Now assume that we apply an average pooling followed by a linear layer to the output of the walk aggregator in Eq.~\eqref{eqn:walk_agg}. By adjusting the normalization factor to a constant $\nicefrac{m\ell}{|V|}$, we can express the function $g_{f,m,\ell}$ on $\Gcal$ as an average over functions of walk feature vectors:
\begin{equation}
    g_{f,m,\ell}(G)=\frac{1}{m}\sum_{W\in P_m(\Wcal_{\ell}(G))} f(X_W)
\end{equation}
where $f:\R^{d_{\text{walk}}}\to \R$ is some function on walk feature vectors. 

If we sample a sufficiently large number of random walks, the average function $g_{f,m,\ell}(G)$ converges almost surely to $g_{f,\ell}:=\Ebb_{X_W\sim P(\Wcal_{\ell}(G))}[f(X_W)]$, due to the law of large numbers.
This result can be further quantified using the central limit theorem, which provides a rate of convergence (see Theorem~\ref{thm:central_limit} in Appendix).
Furthermore, we have the following useful properties of this limit:
\begin{theorem}[Lipschitz continuity]
    For some functional space $\Fcal$ of functions on walk feature vectors, we define the following distance $d_{\Fcal}:\Gcal\times \Gcal\to\R_{+}$:
    \begin{equation}
        d_{\Fcal,\ell}(G,G'):=\sup_{f\in \Fcal} \left|\Ebb_{X_W\sim P(\Wcal_{\ell}(G))}[f(X_W)] - \Ebb_{X_{W'}\sim P(\Wcal_{\ell}(G'))}[f(X_{W'})] \right|.
    \end{equation}
    Then $(\Gcal_n,d_{\Fcal,\ell})$ is a metric space if $\Fcal$ is a universal space and $\ell\geq 4n^3$. 
    
    If $\Fcal$ contains $f$, then for any $G,G'\in \Gcal_n$, we have
    \begin{equation}
        \left|g_{f,\ell}(G) - g_{f,\ell}(G')\right| \leq d_{\Fcal,\ell}(G, G').
    \end{equation}
    In particular, if $f\in\Fcal$ is an $L$-Lipschitz function, the difference in outputs is bounded by the earth mover's distance $W_1(\cdot,\cdot)$ between the distributions of walk feature vectors: %
    \begin{equation}
        \left|g_{f,\ell}(G) - g_{f,\ell}(G')\right| \leq L\cdot W_1(P(\Wcal_{\ell}(G)), P(\Wcal_{\ell}(G'))).
    \end{equation}
\end{theorem}
{The Lipschitz constant, widely used to assess neural network stability under small perturbations~\citep{virmaux2018lipschitz}, guarantees that NeuralWalker maintains stability when subjected to minor alterations in graph structure. Notably, parameterizing $f$ with several neural network layers yields a Lipschitz constant comparable to that of MPNNs on a pseudometric space defined by the tree mover's distance~\citep{chuang2022tree}. However, a key distinction lies in the input space metrics: while MPNNs operate on tree structures, NeuralWalker focuses on the distribution of walk feature vectors. A more comprehensive comparison of MPNNs' and NeuralWalker's stability and generalization under distribution shift is left for future research.}
\begin{theorem}[Injectivity]\label{thm:injectivity}
    Assume that $\Fcal$ is a universal space. If $G$ and $G'$ are non-isomorphic graphs, then there exists an $f \in \Fcal$ such that $g_{f,\ell}(G)\neq g_{f,\ell}(G')$ if $\ell\geq 4\max\{|V|,|V'|\}^3$. 
\end{theorem}
The injectivity property ensures that our model with a sufficiently large number of sufficiently long ($\geq 4n^3$) random walks can distinguish between non-isomorphic graphs. {It is worth noting that although our assumptions include specific conditions on the random walk length to establish the space as a metric, removing the length constraint still results in a pseudometric space. In this case, $d_{\mathcal{F},\ell}(G,G')>0$ if $G$ and $G'$ are distinguishable by the $(\lfloor \nicefrac{\ell}{2} \rfloor+1)$-subgraph isomorphism test, where $\lfloor \cdot \rfloor$ is the floor function (\ie they do not have the same set of subgraphs up to size $(\lfloor \nicefrac{\ell}{2} \rfloor+1)$). %
}

{
Using the previous result jointly with the message-passing module, we arrive at the following result, which particularly highlights the advantage of combining random walks and message passing. 
}

\begin{theorem}\label{thm:main_expressivity}
	For any $\ell \geq 2$, NeuralWalker equipped with the complete walk set $\Wcal_\ell$ is strictly more expressive than 1-WL and the $(\lfloor \nicefrac{\ell}{2} \rfloor+1)$-subgraph isomorphism test, and thus ordinary MPNNs. 
\end{theorem}

{
The injectivity in Thm.~\ref{thm:injectivity} is guaranteed only if $\Fcal$ is a universal functional space. This condition highlights a limitation in approaches like CRaWL~\citep{tonshoff2023walking} which employs CNNs to process walk feature vectors. CNNs can only achieve universality under strict conditions, including periodic boundary conditions and a large number of layers~\citep{yarotsky2022universal}. However, random walks generally do not satisfy periodic boundary conditions, and utilizing an excessive number of layers can exacerbate issues such as over-squashing and over-smoothing. In contrast, the sequence models considered in this work, such as transformers and SSMs, are universal approximators for any sequence-to-sequence functions~\citep{yuntransformers,wang2024state}. 
Furthermore, the proof of Thm.~\ref{thm:main_expressivity} suggests that random walk-based models without message passing cannot be more expressive than 1-WL. %
Consequently, our model is provably more expressive than CRaWL. 
}

{
Finally, we have the following complexity results:
\begin{theorem}[Complexity]\label{thm:complexity}
    The complexity of NeuralWalker, when used with Mamba~\citep{gu2023mamba}, is $O(kdn(\gamma \ell+\beta))$, where $k,d,n,\gamma,\ell,\beta$ denote the number of layers, hidden dimensions, the (maximum) number of nodes, sampling rate, length of random walks and average degree, respectively.
\end{theorem}
}

\section{Experiments}\label{sec:exp}
In this section, we compare NeuralWalker to several SOTA models on a diverse set of 19 benchmark datasets. 
{Furthermore, we provide a detailed ablation study on components of our model}. 
Appendix~\ref{app:sec:exp} provides more details about the experimental setup, datasets, runtime, and additional results.

\begin{table}[t]
    \centering
    \caption{Test performance on benchmarks from~\cite{dwivedi2023benchmarking}. Metrics with mean ± std of 4 runs are reported. The result with $^\star$ is obtained using the pretraining strategy presented in Section~\ref{sec:pretraining}.}
    \label{tab:gnn_benchmarks}
    \begin{sc}
    \resizebox{.9\textwidth}{!}{
    \begin{tabular}{lccccc}\toprule
         & \textbf{ZINC} & \textbf{MNIST} & \textbf{CIFAR10} & \textbf{PATTERN} & \textbf{CLUSTER} \\ 
        \# graphs & 12K & 70K & 60K & 14K & 12K \\
        Avg. \# nodes & 23.2 & 70.6 & 117.6 & 118.9 & 117.2   \\
        Avg. \# edges & 24.9 & 564.5 & 941.1 & 3039.3 & 2150.9 \\
        Metric & MAE $\shortdownarrow$ & ACC $\shortuparrow$ & ACC $\shortuparrow$ & ACC $\shortuparrow$ & ACC $\shortuparrow$ \\ \midrule
        GCN~\cite{kipf2016semi} & 0.367 ± 0.011 & 90.705 ± 0.218 & 55.710 ± 0.381 & 71.892 ± 0.334 & 68.498 ± 0.976 \\
        GIN~\cite{xu2019powerful} & 0.526 ± 0.051 & 96.485 ± 0.252 & 55.255 ± 1.527 & 85.387 ± 0.136 & 64.716 ± 1.553 \\
        GAT~\cite{velivckovic2018graph} & 0.384 ± 0.007 & 95.535 ± 0.205 & 64.223 ± 0.455 & 78.271 ± 0.186 & 70.587 ± 0.447 \\ 
        GatedGCN~\cite{bresson2017residual} & 0.282 ± 0.015 & 97.340 ± 0.143 & 67.312 ± 0.311 & 85.568 ± 0.088 & 73.840 ± 0.326 \\ \midrule
        Graphormer~\cite{ying2021transformers} & 0.122 ± 0.006 & -- & -- & -- & -- \\
        SAT~\cite{chen2022structure} & 0.089 ± 0.002 & -- & -- & 86.848 ± 0.037 & 77.856 ± 0.104 \\
        GPS~\cite{rampavsek2022recipe} & 0.070 ± 0.004 & 98.051 ± 0.126 & 72.298 ± 0.356 & 86.685 ± 0.059 & 78.016 ± 0.180 \\
        Exphormer~\cite{shirzad2023exphormer} & -- & 98.55 ± 0.03 & 74.69 ± 0.13 & 86.70 ± 0.03 & 78.07 ± 0.037 \\
        GRIT~\citep{ma2023graph} & 0.059 ± 0.002 & 98.108 ± 0.111 & 76.468 ± 0.881 & \textbf{87.196 ± 0.076} & \textbf{80.026 ± 0.277} \\ \midrule
        CRaWL~\cite{tonshoff2023walking} & 0.085 ± 0.004 & 97.944 ± 0.050 & 69.013 ± 0.259 & -- & -- \\ \midrule
        NeuralWalker & \textbf{0.053 ± 0.002}$^{\star}$ & \textbf{{98.692 ± 0.079}} & \textbf{{76.903 ± 0.457}} & 86.977 ± 0.012 & 78.189 ± 0.188 \\ \bottomrule
    \end{tabular}
    }
    \end{sc}
\end{table}

\begin{table}[t]
    \centering
    \caption{Test performance on LRGB~\cite{dwivedi2022long}. Metrics with mean ± std of 4 runs are reported. NeuralWalker improves the best baseline by 10\% and 13\% on \textbf{PascalVOC-SP} and \textbf{COCO-SP} respectively.}
    \label{tab:lrgb}
    \begin{sc}
    \resizebox{\textwidth}{!}{
    \begin{tabular}{lccccc}\toprule
         & \textbf{PascalVOC-SP} & \textbf{COCO-SP} & \textbf{Peptides-func} & \textbf{Peptides-struct} & \textbf{PCQM-Contact} \\ 
        \# graphs & 11.4K & 123.3K & 15.5K & 15.5K & 529.4K \\
        Avg. \# nodes & 479.4 & 476.9 &  150.9 &  150.9 & 30.1  \\
        Avg. \# edges & 2,710.5 &  2,693.7 & 307.3 & 307.3 &  61.0 \\
        Metric & F1 $\shortuparrow$ & F1 $\shortuparrow$ & AP $\shortuparrow$ & MAE $\shortdownarrow$ & MRR $\shortuparrow$ \\ \midrule
        GCN~\cite{kipf2016semi,tonshoff2023did} & 0.2078 ± 0.0031 & 0.1338 ± 0.0007 &  0.6860 ± 0.0050 & \textbf{0.2460 ± 0.0007} &  0.4526 ± 0.0006 \\
        GIN~\cite{xu2019powerful,tonshoff2023did} & 0.2718 ± 0.0054 & 0.2125 ± 0.0009 &  0.6621 ± 0.0067 & 0.2473 ± 0.0017 &  0.4617 ± 0.0005 \\
        GatedGCN~\cite{bresson2017residual,tonshoff2023did} & 0.3880 ± 0.0040 & 0.2922 ± 0.0018 & 0.6765 ± 0.0047 & 0.2477 ± 0.0009 &  0.4670 ± 0.0004 \\ \midrule
        GPS~\cite{rampavsek2022recipe} & 0.3748 ± 0.0109 & 0.3412 ± 0.0044 & 0.6535 ± 0.0041 & 0.2500 ± 0.0005 & --  \\
        GPS~\cite{tonshoff2023did} &  0.4440 ± 0.0065 & 0.3884 ± 0.0055 &  0.6534 ± 0.0091 & 0.2509 ± 0.0014 & 0.4703 ± 0.0014 \\ 
        Exphormer~\cite{shirzad2023exphormer} & 0.3975 ± 0.0037 & 0.3455 ± 0.0009 & 0.6527 ± 0.0043 & 0.2481 ± 0.0007 & -- \\ 
        GRIT~\citep{ma2023graph} & -- & -- & 0.6988 ± 0.0082 & 0.2460 ± 0.0012 & -- \\ \midrule
        CRawL~\cite{tonshoff2023walking} & -- & -- & 0.7074 ± 0.0032 & 0.2506 ± 0.0022 & -- \\ \midrule
        NeuralWalker & \textbf{0.4912 ± 0.0042} & \textbf{0.4398 ± 0.0033} & \textbf{0.7096 ± 0.0078} & 0.2463 ± 0.0005 &  \textbf{0.4707 ± 0.0007}  \\ \bottomrule
    \end{tabular}
    }
    \end{sc}
\end{table}

\subsection{Benchmarking NeuralWalker to state-of-the-art methods}\label{sec:benchmarking}
We compare NeuralWalker against several popular message passing GNNs, GTs, and walk-based models. GNNs include GCN~\citep{kipf2016semi}, GIN~\citep{xu2019powerful}, GAT~\citep{velivckovic2018graph}, GatedGCN~\citep{bresson2017residual}. GTs include GraphTrans~\citep{wu2021representing}, SAT~\citep{chen2022structure}, GPS~\citep{rampavsek2022recipe}, Exphormer~\citep{shirzad2023exphormer}, NAGphormer~\citep{chen2022nagphormer}, GRIT~\citep{ma2023graph}, Polynormer~\citep{deng2024polynormer}. Walk-based models include CRaWL~\citep{tonshoff2023walking}. To ensure diverse benchmarking tasks, we
use datasets from Benchmarking-GNNs~\citep{dwivedi2023benchmarking}, Long-Range Graph Benchmark~(LRGB)~\citep{dwivedi2022long}, Open Graph Benchmark (OGB)~\citep{hu2020open}, and node classification datasets from~\cite{platonov2022critical,snapnets}.

\paragraph{Benchmarking GNNs.}
We evaluated NeuralWalker's performance on five tasks from the Benchmarking GNNs suite: ZINC, MNIST, CIFAR10, PATTERN, and CLUSTER (results in Table~\ref{tab:gnn_benchmarks}). Notably, NeuralWalker achieved SOTA results on three out of five datasets and matched the best-performing model on the remaining two. While GRIT exhibited superior performance on the two small synthetic datasets, its scalability to larger datasets, such as those in LRGB, is limited, as demonstrated in the subsequent paragraph. It is worth noting that NeuralWalker significantly outperforms the previous SOTA random walk-based model, CRaWL. This improvement can be attributed to the integration of message passing and the Mamba architecture, as discussed in Sections~\ref{sec:theory}. A more extensive empirical comparison of them is also given in Section~\ref{sec:ablation_studies}. These results underscore NeuralWalker's robust performance across diverse synthetic benchmark tasks.

\paragraph{Long-Range Graph Benchmark.}
We further evaluated NeuralWalker's ability to capture long-range dependencies on the recently introduced LRGB benchmark, encompassing five datasets designed to test this very capability (details in~\citet{rampavsek2022recipe,dwivedi2022long}). Note that for PCQM-Contact, we used the filtered Mean Reciprocal Rank (MRR), introduced by~\cite{tonshoff2023did}, as the evaluation metric. NeuralWalker consistently outperformed all baseline methods across all but one task (see Table~\ref{tab:lrgb}). Notably, on PascalVOC-SP and COCO-SP, where previous work has shown the importance of long-range dependencies (\eg~\citet{tonshoff2023did}), NeuralWalker significantly surpassed the SOTA models by a substantial margin, \emph{up to a 10\% improvement}.

\paragraph{Open Graph Benchmark.}
To assess NeuralWalker's scalability on massive quantities of graphs, we evaluated it on the OGB benchmark, which includes datasets exceeding 100K graphs each. For computational efficiency, we employed 1D CNNs as the sequence layers in this experiment. NeuralWalker achieved SOTA performance on two out of the three datasets (Table~\ref{tab:ogb}), demonstrating its ability to handle large-scale graph data. However, the \textsc{ogbg-ppa} dataset presented challenges with overfitting. On this dataset, NeuralWalker with just one block outperformed its multiblock counterpart on this dataset, suggesting potential limitations in regularization needed for specific tasks.
\paragraph{Node classification on large graphs.}
We further explored NeuralWalker's ability to handle large graphs in node classification tasks. We integrated NeuralWalker with Polynormer~\citep{deng2024polynormer}, the current SOTA method in this domain. In this experiment, NeuralWalker utilized very long walks (up to 1,000 steps) with a low sampling rate ($\leq 0.01$) to capture long-range dependencies, replacing the transformer layer within Polynormer that still \emph{struggles to scale to large graphs even with linear complexity}. Despite eschewing transformer layers entirely, NeuralWalker achieved performance comparable to Polynormer (Table~\ref{tab:node_classification}), showing its scalability and effectiveness in modeling large graphs. Indeed, the complexity of NeuralWalker can be flexibly controlled by its sampling rate and length, as shown in Section~\ref{sec:theory}. A notable highlight is NeuralWalker's ability to efficiently process the massive pokec dataset (1.6M nodes) using a single H100 GPU with 80GB of RAM.

\subsection{Masked Positional Encoding Pretraining}\label{sec:pretraining}
{Explicitly utilizing random walks as sequences offers a significant advantage: it allows for the application of advanced language modeling techniques. As a proof-of-concept, we adapt the BERT pretraining strategy~\citep{devlin2019bert} to the positional encodings $h_{\text{pe}}$ of random walks. Our approach involves randomly replacing 15\% of the positions in $h_{\text{pe}}$ with a constant vector of 0.5, with the objective of recovering the original binary encoding vectors for these positions. This method can be further enhanced by combining it with other established pretraining strategies, such as attributes masking~\citep{hu2020strategies}. Our experiments, as shown in Table~\ref{tab:pretraining}, demonstrate that combining these strategies (\ie we first pretrain the model with masked positional encoding prediction and then continue with masked attributes pretraining) significantly enhances performance on the ZINC dataset.}

\begin{table}[t]
    \centering
    \begin{minipage}[c]{.6\textwidth}
        \caption{Test performance on OGB~\citep{hu2020open}. Metrics with mean ± std of 10 runs are reported.}
        \label{tab:ogb}
        \begin{sc}
        \resizebox{.9\textwidth}{!}{
        \begin{tabular}{lccc}\toprule
            Dataset & \textbf{ogbg-molpcba} & \textbf{ogbg-ppa} & \textbf{ogbg-code2}\\ 
             \# graphs & 437.9K & 158.1K & 452.7K \\
            Avg. \# nodes & 26.0 & 243.4 & 125.2  \\
            Avg. \# edges &  28.1 & 2,266.1 & 124.2  \\
            Metric & AP $\shortuparrow$ & ACC $\shortuparrow$ & F1 $\shortuparrow$ \\ \midrule
            GCN & 0.2424 ± 0.0034 & 0.6857 ± 0.0061 & 0.1595 ± 0.0018 \\ %
            GIN & 0.2703 ± 0.0023 & 0.7037 ± 0.0107 & 0.1581 ± 0.0026 \\ \midrule %
            GraphTrans &  0.2761 ± 0.0029 & -- & 0.1830 ± 0.0024 \\ %
            SAT & -- & 0.7522 ± 0.0056 & 0.1937 ± 0.0028 \\ %
            GPS & 0.2907 ± 0.0028 & \textbf{0.8015 ± 0.0033} & 0.1894 ± 0.0024 \\ \midrule %
            CRaWL & 0.2986 ± 0.0025 & -- & -- \\ \midrule %
            NeuralWalker & \textbf{0.3086 ± 0.0031} & 0.7888 ± 0.0059 &  \textbf{0.1957 ± 0.0025} \\ \bottomrule
        \end{tabular}
        }
        \end{sc}
    \end{minipage}
    \hfill
    \begin{minipage}[c]{.35\textwidth}
        \caption{Comparison of different pretraining strategies on the ZINC dataset. The pretraining was performed on ZINC \emph{without using any external data}.}\label{tab:pretraining}
        \begin{sc}
            \resizebox{.9\textwidth}{!}{
            \begin{tabular}{lc}\toprule
                 Strategy & ZINC$\shortdownarrow$  \\ \midrule
                 w/o pretrain & 0.063 ± 0.001 \\
                 masked attr.\ & 0.061 ± 0.001 \\
                 masked PE & 0.055 ± 0.004 \\
                 masked PE + attr.\ & \textbf{0.053 ± 0.002} \\ \bottomrule
            \end{tabular}
            }
        \end{sc}
    \end{minipage}
\end{table}
\begin{figure}
    \centering
    \includegraphics[width=0.9\textwidth]{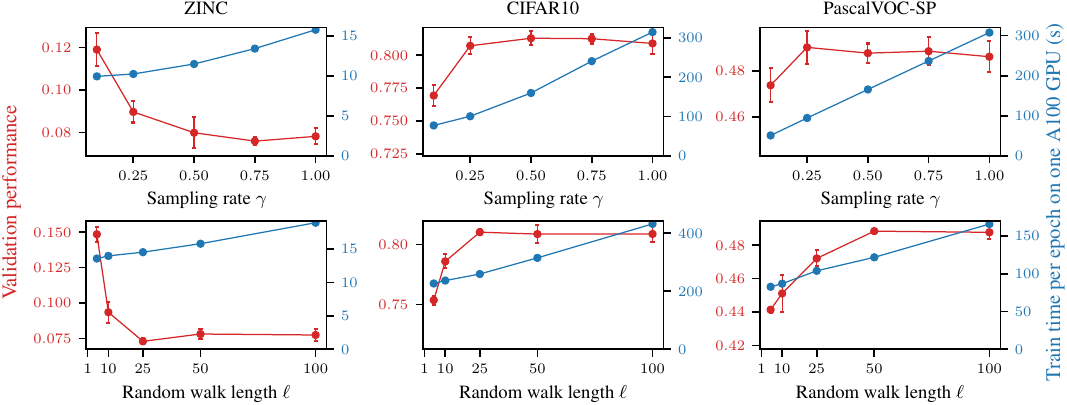}
    \caption{Validation performance when varying sampling rate and length of random walks.}
    \label{fig:rw_sample_rate_and_length}
\end{figure}

\begin{table}[t]
    \centering
    \caption{Test performance on node classification benchmarks from~\cite{platonov2022critical} and \cite{snapnets}. Metrics with mean ± std of 10 runs are reported.}
    \label{tab:node_classification}
    \begin{sc}
    \resizebox{\textwidth}{!}{
    \begin{tabular}{lcccccc}\toprule
         Dataset & \textbf{roman-empire} & \textbf{amazon-ratings} & \textbf{minesweeper} & \textbf{tolokers} & \textbf{questions} & \textbf{pokec}\\
         \# nodes & 22,662 & 24,492 & 10,000 & 11,758 & 48,921 & 1,632,803  \\
         \# edges &  32,927 & 93,050 & 39,402 & 519,000 & 153,540 & 30,622,564 \\
         Metric & ACC $\shortuparrow$ & ACC $\shortuparrow$ & ROC AUC $\shortuparrow$ & ROC AUC $\shortuparrow$ & ROC AUC $\shortuparrow$ & ACC $\shortuparrow$ \\ \midrule
        GCN & 73.69 ± 0.74 & 48.70 ± 0.63 & 89.75 ± 0.52 & 83.64 ± 0.67 & 76.09 ± 1.27 & 75.45 ± 0.17 \\ %
        GAT(-sep) & 88.75 ± 0.41 & 52.70 ± 0.62 & 93.91 ± 0.35 & 83.78 ± 0.43 & 76.79 ± 0.71 & 72.23 ± 0.18 \\ \midrule %
        GPS & 82.00 ± 0.61 & 53.10 ± 0.42 & 90.63 ± 0.67 & 83.71 ± 0.48 & 71.73 ± 1.47 & OOM \\ %
        NAGphormer & 74.34 ± 0.77 & 51.26 ± 0.72 & 84.19 ± 0.66 & 78.32 ± 0.95 & 68.17 ± 1.53 & 76.59 ± 0.25 \\ %
        Exphormer & 89.03 ± 0.37 & 53.51 ± 0.46 & 90.74 ± 0.53 & 83.77 ± 0.78 & 73.94 ± 1.06 & OOM \\ %
        Polynormer & 92.55 ± 0.37 & \textbf{54.81 ± 0.49} & 97.46 ± 0.36 & \textbf{85.91 ± 0.74} & \textbf{78.92 ± 0.89} & 86.10 ± 0.05 \\ \midrule %
        \method & \textbf{92.92 ± 0.36} & 54.58 ± 0.36  & \textbf{97.82 ± 0.40} & 85.56 ± 0.74 & 78.52 ± 1.13 & \textbf{86.46 ± 0.09} \\
        \bottomrule
    \end{tabular}
    }
    \end{sc}
\end{table}

\subsection{Ablation studies}\label{sec:ablation_studies}
Here, we dissect the main components of our model architecture to gauge their contribution to predictive performance and to guide dataset-specific hyperparameter optimization. 

We perform ablation studies on three datasets, from small to large graphs. Our analysis focuses on three key aspects: 1) We demonstrate the crucial role of integrating local and global message passing with random walks. 2) we evaluate various options for the sequence layer to identify the optimal choice. 3) We examine the impact of varying the sampling rate and length of random walks, revealing a trade-off between expressivity and computational complexity. Notably, \emph{these parameters allow explicit control over model complexity}, a unique feature of our approach compared to subgraph MPNNs, which typically exhibit high complexity. All ablation experiments were performed on the \emph{validation set}, with results averaged over four random seeds. The comprehensive findings are summarized in Table~\ref{tab:ablation}. Since NeuralWalker's output depends on the sampled random walks at inference, we demonstrate its robustness to sampling variability in Appendix~\ref{app:sec:detailed_results}.
\paragraph{Effect of local and global message passing.}
Motivated by the limitations of the depth-first nature inherent in pure random walk-based encoders, as discussed in Section~\ref{sec:message_passing}, this study investigates the potential complementary benefits of message passing.  We conducted an ablation study (Table~\ref{tab:ablation_mp}) comparing NeuralWalker's variants with and without local or global message passing modules. For local message passing, we employed a GIN with edge features~\citep{xu2019powerful,hu2019strategies}. Global message passing was explored using virtual node layers~\citep{gilmer2017neural} and transformer layers~\citep{vaswani2017attention,chen2022structure}. Keeping the sequence layer fixed to Mamba, we observed that NeuralWalker with GIN consistently outperforms the version without, confirming the complementary strengths of random walks and local message passing.
The impact of global message passing, however, varies across datasets, a phenomenon also noted by~\citet{rosenbluth2024distinguished}. Interestingly, larger graphs like PascalVOC-SP demonstrate more significant gains from global message passing. This observation suggests promising directions for future research, such as developing methods to automatically identify optimal configurations for specific datasets.
\paragraph{Comparison of sequence layer architectures.}
We investigated the impact of various sequence layer architectures on walk embeddings, as shown in Table~\ref{tab:ablation_seq}. The architectures examined include CNN, transformer (with RoPE), and SSMs like S4 and Mamba. Surprisingly, transformers consistently underperformed compared to other architectures, contrasting with their good performance in other domains. This discrepancy may be attributed to the unique sequential nature of walk embeddings, which might not align well with the attention mechanism utilized by transformers.

Mamba emerged as the top performer across all datasets, consistently outperforming its predecessors, S4 and the unidirectional version. However, CNNs present a compelling alternative for large datasets due to their faster computation (typically 2-3x faster than Mamba on A100). This presents a practical trade-off: Mamba offers superior accuracy but requires more computational resources. CNNs might be preferable for very large datasets or real-time applications where speed is critical. In our benchmarking experiments, we employed Mamba as the sequence layer, except for the OGB datasets. Finally, as predicted by Thm.~\ref{thm:main_expressivity}, both our Mamba and CNN variants with message passing significantly outperform CRaWL which relies on CNNs and does not use any message passing.

\paragraph{Impact of random walk sampling strategies.}
We examined the impact of varying random walk sampling rates and lengths on NeuralWalker's performance, using Mamba as the sequence layer. While we adjusted the sampling rate during training, we fixed it at 1.0 for inference to maximize coverage. As anticipated, a larger number of longer walks led to improved coverage of the graph's structure, resulting in clear performance gains (Figure~\ref{fig:rw_sample_rate_and_length}). However, this improvement plateaus as walks become sufficiently long, indicating diminishing returns beyond a certain threshold. Crucially, these performance gains come at the cost of increased computation time, which scales linearly with both sampling rate and walk length, as predicted by Thm.~\ref{thm:complexity}. This underscores the trade-off between expressivity and complexity, which can be explicitly controlled through these two hyperparameters. In practice, this trade-off between performance and computational cost necessitates careful consideration of resource constraints when selecting sampling rates and walk lengths. Future research could explore more efficient sampling strategies to minimize the necessary sampling rate.

\begin{table}[t]
    \centering
    \caption{Ablation studies of NeuralWalker. Average validation performance over 4 runs is reported.}\label{tab:ablation}
    \begin{subtable}[t]{.49\textwidth}
        \centering
        \caption{Comparison of local and global message passing (MP). The sequence layer is fixed to Mamba. VN denotes the virtual node and Trans.\ denotes the transformer layer.}\label{tab:ablation_mp}\vspace{0.15cm}
        \begin{sc}
        \resizebox{\textwidth}{!}{
        \begin{tabular}{lccc}\toprule
            MP (local + global) & ZINC$\shortdownarrow$ & CIFAR10$\shortuparrow$ & PascalVOC-SP$\shortuparrow$ \\ \midrule
            GIN + w/o & \textbf{0.085} & \textbf{80.885} & \textbf{0.4611}\\
            w/o + w/o & 0.090 & 79.035 & 0.4525 \\ \midrule
            GIN + VN & \textbf{0.078} & 78.610 & 0.4672 \\
            GIN + Trans.\ & 0.083 & 80.755 & \textbf{0.4877} \\
            GIN + w/o & 0.085 & \textbf{80.885} & 0.4611 \\ \bottomrule
        \end{tabular}
        }
        \end{sc}
    \end{subtable}
    \hfill
    \begin{subtable}[t]{.47\textwidth}
        \centering
        \caption{Comparison of sequence layers. Local and global MP are selected to give the best validation performance except for the highlighted row corresponding to CRaWL, which does not use message passing.}\label{tab:ablation_seq}
        \begin{sc}
        \resizebox{\textwidth}{!}{
        \begin{tabular}{lccc}\toprule
            Sequence Layer & ZINC$\shortdownarrow$ & CIFAR10$\shortuparrow$ & PascalVOC-SP$\shortuparrow$ \\ \midrule
            Mamba & \textbf{0.078} & \textbf{80.885} & \textbf{0.4877} \\
            Mamba (w/o bid) &0.089 & 74.910 & 0.4522 \\
            S4 & 0.082 & 77.970 & 0.4559 \\
            CNN & 0.088 & 80.665 & 0.4652 \\
            Trans.\ & 0.084 & 72.850& 0.4316\\ 
            \rowcolor{lightgray} CNN (w/o MP) & 0.116 & 78.760 & 0.3954 \\ \bottomrule
        \end{tabular}
        }
        \end{sc}
    \end{subtable}
\end{table}

\section{Conclusion}\label{sec:conclusion}
We have introduced NeuralWalker, a powerful and flexible architecture that combines random walks and message passing to address the expressivity limitations of structural encoding in graph learning. By treating random walks as sequences and leveraging advanced sequence modeling techniques, NeuralWalker achieves superior performance compared to existing GNNs and GTs, as demonstrated through extensive experiments on various benchmarks.
Looking forward, we acknowledge opportunities for further exploration. First, investigating more efficient random walk sampling strategies with improved graph coverage could potentially enhance NeuralWalker's performance. Second, exploring more self-supervised learning techniques for learning on random walks holds promise for extending NeuralWalker's applicability to unlabeled graphs.

\paragraph{Limitations.}
NeuralWalker demonstrates good scalability to large graphs. However, one potential limitation lies in the trade-off between the sampling efficiency of random walks and graph coverage for very large graphs. In this work, we explored a computationally efficient sampling strategy but probably not with the optimal graph coverage. Investigating more efficient random walk sampling strategies that improve coverage while maintaining computational efficiency could further enhance NeuralWalker's performance.

Additionally, we identify a scarcity of publicly available graph datasets with well-defined long-range dependencies. While datasets like LRGB provide valuable examples, the limited number of such datasets hinders comprehensive evaluation and the potential to push the boundaries of long-range dependency capture in graph learning tasks. Furthermore, based on our experiments and \cite{tonshoff2023did}, only 2 out of the 5 datasets in LRGB seem to present long-range dependencies. 

\paragraph{Broader impacts.}
While our research primarily focuses on general graph representation learning, we recognize the importance of responsible and ethical application in specialized fields. When utilized in domains such as drug discovery or computational biology, careful attention must be paid to ensuring the trustworthiness and appropriate use of our method to mitigate potential misuse. Our extensive experiments demonstrate the significant potential of our approach in both social network and biological network analysis, highlighting the promising societal benefits our work may offer in these specific areas.

\section*{Acknowledgements}
We thank Luis Wyss and Trenton Chang for their insightful feedback on the manuscript.

\bibliography{mybib}
\bibliographystyle{plainnat}

\newpage
\appendix

\vspace*{0.3cm}
\begin{center}
    {\huge Appendix}
\end{center}
\vspace*{0.5cm}

This appendix provides both theoretical and experimental materials and is organized as follows: Section~\ref{app:sec:background} provides a more detailed background of related work. Section~\ref{app:sec:remarks_neuralwalker} presents some additional remarks on Neural Walker, including limitations and societal impacts. Section~\ref{app:sec:theory} provides theoretical background and proofs. Section~\ref{app:sec:exp} provides experimental details and additional results. 

\section{Background}\label{app:sec:background}
\subsection{Message-Passing Graph Neural Networks}\label{app:sec:mpnn}
Graph Neural Networks (GNNs) refine node representations iteratively by integrating information from neighboring nodes. Xu et al. (2019)~\citep{xu2019powerful} provide a unifying framework for this process, consisting of three key steps: AGGREGATE, COMBINE, and READOUT. Various GNN architectures can be seen as variations within these functions.

In each layer, the AGGREGATE step combines representations from neighboring nodes (e.g., using sum or mean), which are then merged with the node's previous representation in the COMBINE step. This is typically followed by a non-linear activation function, such as ReLU. The updated representations are then passed to the next layer, and this process repeats for each layer in the network. These steps primarily capture local sub-structures, necessitating a deep network to model interactions across the entire graph.

The READOUT function ultimately aggregates node representations to the desired output granularity, whether at the node or graph level. Both AGGREGATE and READOUT steps must be permutation invariant. This framework offers a comprehensive perspective for understanding the diverse array of GNN architectures.

\subsection{Transformer on Graphs}\label{app:sec:GTs}
While Graph Neural Networks (GNNs) explicitly utilize graph structures, Transformers infer node relationships by focusing on node attributes. Transformers, introduced by \cite{vaswani2017attention}, treat the graph as a (multi-)set of nodes and use self-attention to determine node similarity.

A Transformer consists of two main components: a self-attention module and a feed-forward neural network (FFN). In self-attention, input features \(\X\) are linearly projected into query (\(\Q\)), key (\(\K\)), and value (\(\V\)) matrices. Self-attention is then computed as:
\[
\mathrm{Attn}(\X) := \softmax\left(\frac{\Q\K^T}{\sqrt{d_{\text{out}}}}\right)\V \in \mathbb{R}^{n \times d_{\text{out}}},
\]
where \(d_{\text{out}}\) is the dimension of \(\Q\). Multi-head attention, which concatenates multiple instances of this equation, has proven effective in practice.

A Transformer layer combines self-attention with a skip connection and FFN:
\[
\begin{aligned}
\X' &= \X + \mathrm{Attn}(\X), \\
\X'' &= \mathrm{FFN}(\X') := \text{ReLU}(\X' W_1) W_2.
\end{aligned}
\]
Stacking multiple layers forms a Transformer model, resulting in node-level representations. However, due to self-attention's permutation equivariance, Transformers produce identical representations for nodes with matching attributes, regardless of their graph context. Thus, incorporating structural information, typically through positional or structural encoding such as Laplacian positional encoding or random walk structural encoding~\citep{dwivedi2021graph,rampavsek2022recipe}, is crucial.

\subsection{State Space Models}\label{app:sec:ssm}
As we treat random walks explicitly as sequences, recent advances in long sequence modeling could be leveraged directly to model random walks. SSMs are a type of these models that have shown promising performance in long sequence modeling. SSMs map input sequence $x(t)\in\R$ to some response sequence $y(t)\in\R$ through an implicit state $h(t)\in\R^N$ and three parameters $(A,B,C)$:
\begin{equation*}
    h'(t)=Ah(t)+Bx(t),\qquad y(t)=Ch(t).
\end{equation*}
For computational reasons, structured SSMs~(S4)~\citep{gu2021efficiently} proposes to discretize the above system by introducing a time step variable $\Delta$ and a discretization rule, leading to a reparametrization of the parameters $A$ and $B$. Then, the discrete-time SSMs can be computed in two ways either as a linear recurrence or a global convolution. Recently, a selection mechanism~\citep{gu2023mamba} has been introduced to control which part of the sequence can flow into the hidden states, making the parameters in SSMs time and data-dependent. The proposed model, named Mamba, significantly outperforms its predecessors and results in several successful applications in many tasks. More recently, a bidirectional version of Mamba~\citep{zhu2024vision} has been proposed to handle image data, by averaging the representations of both forward and backward sequences after each Mamba block.

\section{Additional Remarks on Neural Walker}\label{app:sec:remarks_neuralwalker}

\subsection{Illustration of the Position Encodings for Random Walks}\label{app:sec:positional_encoding}
Here, we give a visual example of the positional encodings that we presented in Section~\ref{sec:walk_sampler}. The example is shown in Figure~\ref{app:fig:positional_encoding}.

\begin{figure}[h]
    \centering
    \includegraphics{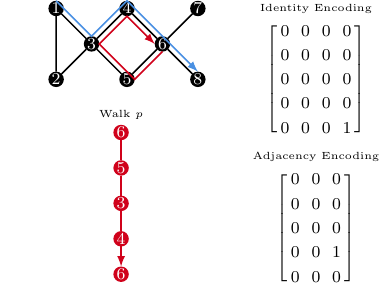}
    \caption{An example of the identity encoding and adjacency encoding presented in Secion~\ref{sec:walk_sampler}. On the random walk colored in red, we have $\mathrm{id}_W[4,3]=1$ as $w_4=w_0=6$. We have $\mathrm{adj}_W[3,2]=1$ as $w_3w_0\in E$ is an edge of the graph.}
    \label{app:fig:positional_encoding}
\end{figure}

\subsection{Global Message Passing Techniques}\label{app:sec:global_mp}
Even though long random walks could be sufficient to capture global information, we empirically found that global message passing is still useful in certain tasks. Here, we consider two techniques, namely virtual node and transformer layer.
Similar to \citet{gilmer2017neural,tonshoff2023walking}, a virtual node layer could be a simple solution to achieve this. 
Such a layer is explicitly defined as the following:
\begin{equation}
    h_V^t(\star)=\mathrm{MLP}\left(h_V^{t-1}(\star) +\sum_{v\in V}h_V^{\text{mp}}(v)\right),\qquad h_V^{\text{vn}}(v):=h_V^{\text{mp}}(v)+h_V^t(\star),
\end{equation}
where $\mathrm{MLP}$ is a trainable MLP, $h_V^t(\star)$ represents the virtual node embedding at block $t$ and $h_V^0(\star)=0$. Alternatively, one could use any transformer layer to achieve this. The vanilla transformer layer is given by:
\begin{equation}
    h_V^{\text{attn}}(v)=h_V^{\text{mp}}(v)+\mathrm{Attn}(h_V^{\text{mp}})(v),\qquad h_V^{\text{trans}}(v)=h_V^{\text{attn}}(v)+\mathrm{MLP}(h_V^{\text{attn}}(v)),
\end{equation}
where $\mathrm{Attn}$ is a trainable scaled dot-product attention layer~\citep{vaswani2017attention}. This layer is widely used in recent GT models~\citep{ying2021transformers,chen2022structure,rampavsek2022recipe}.

\section{Theoretical Results}\label{app:sec:theory}
In this section, we present the background of random walks on graphs and the theoretical properties of NeuralWalker.

\begin{definition}[Walk feature vector]\label{app:def:walk_feature_vector}
    For any graph $G=(V,E,x,z)$ and $W\in \Wcal_{\ell}(G)$, the walk feature vector $X_W$ of $W$ is defined, by concatenating the node and edge feature vectors as well as the positional encodings along $W$ of window size $s=\ell$, as
    \begin{equation*}
        X_W=(x(w_i),z(w_i w_{i+1}),h_{\text{pe}}[i])_{i=0,\dots,\ell}\in \R^{(\ell+1)\times d_{\text{walk}}},
    \end{equation*}
    where $h_{\text{pe}}$ is the positional encoding in Section~\ref{sec:walk_sampler}, $\z(w_\ell w_{\ell+1})=0$, and $d_{\text{walk}}:=d+d'+d_{\text{pe}}$. By abuse of notation, we denote by $\Wcal(G)$ the set of walk feature vectors on $G$, and by $P(\Wcal(G))$ a distribution of walk feature vectors on $G$.
\end{definition}

\begin{lemma}\label{lemma:walk_feature_vector}
    The walk feature vector with full graph coverage uniquely determines the graph, \ie for two graphs $G$ and $G'$ in $\Gcal_n$ if there exists a walk $W\in\Wcal_{\ell}(G)$ visiting all nodes on $G$ and a walk $W'\in\Wcal_{\ell}$ visiting all nodes on $G'$ such that $X_{W}=X_{W'}$, then $G$ and $G'$ are isomorphic.
\end{lemma}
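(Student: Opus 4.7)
The plan is to extract from the walk feature vector $X_W$ three pieces of combinatorial data: (i) the partition of walk positions $\{0, 1, \ldots, \ell\}$ into groups corresponding to the same actual node, (ii) the node attribute attached to each group, and (iii) the adjacency pattern (with edge attributes along the walk) between groups. Since equality $X_W = X_{W'}$ forces all three to coincide, the identification of walk positions will give a bijection between $V(G)$ and $V(G')$ that preserves attributes and edges.

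First I would use the identity encoding $\mathrm{id}_W^\ell$ with window size $s = \ell$ to read off the equivalence relation $i \sim_W j \iff w_i = w_j$ on $\{0, \ldots, \ell\}$. A quick index check confirms that for any $i > j$ with $i, j \in \{0, \ldots, \ell\}$, the offset $i - j - 1$ lies in $\{0, \ldots, \ell - 1\}$, so $\mathrm{id}_W^\ell[i, i - j - 1]$ actually records whether $w_i = w_j$; hence $\sim_W$ is fully determined by $\mathrm{id}_W^\ell$, and $X_W = X_{W'}$ forces $\sim_W \; = \; \sim_{W'}$. Because $W$ visits every node of $G$, the equivalence classes of $\sim_W$ are in canonical bijection with $V(G)$, and similarly for $G'$. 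Composing these two bijections gives a map $\phi: V(G) \to V(G')$ sending $w_i \mapsto w'_i$; well-definedness and bijectivity follow directly from $\sim_W \; = \; \sim_{W'}$ and from the coverage assumption on both sides.

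Next I would verify that $\phi$ preserves all labeled graph structure. Node attributes are preserved position by position: for each $i$, the component $x(w_i)$ of $X_W$ equals $x'(w'_i)$, so $x(v) = x'(\phi(v))$ for every $v \in V(G)$. Adjacency is handled by the same windowing argument applied to $\mathrm{adj}_W^\ell$: for any pair of nodes $u = w_i$, $v = w_j$ with $i > j$, the entry of $\mathrm{adj}_W^\ell$ at position $(i, i - j - 1)$ determines whether $uv \in E$, and equality of the encodings transfers this to $G'$, so $\phi$ maps $E$ bijectively onto $E'$. Edge attributes along consecutive walk steps, $z(w_i w_{i+1})$, are explicit coordinates of $X_W$ and therefore match those of $W'$; this covers every edge of $G$ under the standing assumption that the walk traverses all edges (otherwise one should read the lemma in the structure-plus-node-feature sense, since the adjacency encoding only records existence, not attributes, of non-walked edges).

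The main obstacle is really the index accounting: one must confirm that window size $s = \ell$ is large enough for both encodings to expose \emph{every} unordered pair of walk positions, since otherwise pairs with indices near $0$ and $\ell$ could be missed. Modulo the minor inconsistency between the stated shape $(\ell+1)\times(s-1)$ of $\mathrm{adj}_W^s$ and the index range $0 \leq j \leq s-1$ in its definition, this bookkeeping goes through, and the remaining steps (bijectivity of $\phi$, preservation of attributes, preservation of edges) are essentially formal. The only conceptual subtlety worth flagging in the write-up is the treatment of edge attributes for edges not traversed by $W$, which the lemma implicitly handles either by restricting attention to simple unattributed edges or by exploiting that on connected small graphs a walk visiting all nodes also traverses every edge when $\ell$ is large.
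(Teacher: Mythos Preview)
Your proposal is correct and is essentially the argument the paper has in mind: the paper's own ``proof'' is a one-line deferral to Observation~1 of \cite{tonshoff2023walking}, and what you have written is precisely the natural unpacking of that observation (recover the node-identification equivalence from $\mathrm{id}_W^\ell$, the adjacency relation from $\mathrm{adj}_W^\ell$, and the attributes from the explicit feature coordinates). Your flag about edge attributes on non-traversed edges is a genuine subtlety that neither the lemma statement nor the paper's deferred proof addresses; it does not affect the downstream use of the lemma in the metric/injectivity theorems, but it is worth noting in a careful write-up exactly as you do.
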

\begin{proof}
    The proof is immediate following the Observation 1 of \citep{tonshoff2023walking}.
\end{proof}

Now if we replace the normalization factor $N_{v}(P_m(\Wcal_{\ell}(G)))$ in the walk aggregator in Section~\ref{sec:walk_aggregator} with a simpler deterministic constant $\nicefrac{m\ell}{|V|}$ and apply an average pooling followed by a linear layer $x\mapsto u^{\top}x + b\in \R$ to the output of the walk aggregator, then the resulting function $g_{f,m,\ell}:\Gcal \to \R$ defined on the graph space $\Gcal$ can be rewritten as the average of some function of walk feature vectors:
\begin{equation}
    g_{f,m,\ell}(G)=\frac{1}{m}\sum_{W\in P_m(\Wcal_{\ell}(G))} f(X_W),
\end{equation}
where
\begin{equation}
    f(X_W)=\frac{1}{\ell}\sum_{w_i\in W} (u^{\top}f_{seq}(h_W)[i]+b),
\end{equation}
and $h_W$ defined in Eq.~\eqref{eqn:walk_embedding} depend on $X_W$.

Note that the above replacement of the normalization factor is not a strong assumption. It is based on the following lemmas:
\begin{lemma}[\citep{lovasz1993random}]
    Let $G$ be a connected graph. For a random walk $W\sim P(\Wcal(G))$ with $W=(w_0,w_1,\dots,w_t,\dots)$, we denote by $P_t$ the distribution of $w_t$. Then,
    \begin{equation*}
        \pi(v)=\frac{d(v)}{2|E|},
    \end{equation*}
    where $d(v)$ denotes the degree of node $v$, is the (unique) stationary distribution, \ie if $P_0=\pi$ then $P_t=P_0$ for any $t$. If $P_0=\pi(v)$, then we have
    \begin{equation*}
        \Ebb[N_{v}(P_m(\Wcal_{\ell}(G)))]=\frac{m\ell d(v)}{2|E|}.
    \end{equation*}
    In particular, if $G$ is a regular graph, $\pi(v)=\nicefrac{1}{|V|}$ is the uniform distribution nad $\Ebb[N_{v}(P_m(\Wcal_{\ell}(G)))]=\nicefrac{m\ell}{|V|}$.
\end{lemma}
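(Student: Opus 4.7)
The plan is to prove the three assertions in sequence: (i) verify that $\pi(v)=d(v)/(2|E|)$ is a stationary distribution for the simple random walk transition kernel; (ii) argue uniqueness from connectedness of $G$; (iii) compute the expected number of visits by linearity together with stationarity; and finally (iv) specialize to regular graphs. None of these steps is subtle; the result is classical and the excerpt only needs a self-contained derivation.

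For stationarity, I would write the transition kernel explicitly as $P(u,v)=\mathbb{1}[uv\in E]/d(u)$ and check that $(\pi P)(v)=\pi(v)$ by direct computation:
\begin{equation*}
    (\pi P)(v)=\sum_{u\in \Ncal(v)}\frac{d(u)}{2|E|}\cdot\frac{1}{d(u)}=\frac{|\Ncal(v)|}{2|E|}=\frac{d(v)}{2|E|}=\pi(v).
\end{equation*}
The factors $d(u)$ cancel, which is the standard reversibility trick for simple random walks. Uniqueness then follows because $G$ is connected, so the Markov chain is irreducible on the finite state space $V$, and irreducible finite chains admit a unique stationary distribution (periodicity only affects convergence to $\pi$, not the existence or uniqueness of $\pi$ itself, so bipartite graphs cause no issue here).

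For the expected occupancy, I would apply linearity of expectation across the $m$ independent walks and across the positions $i=0,\dots,\ell$ within each walk:
\begin{equation*}
    \Ebb[N_v(P_m(\Wcal_{\ell}(G)))]=\sum_{j=1}^m\sum_{i=0}^{\ell}\Pbb[w_i^{(j)}=v].
\end{equation*}
If $P_0=\pi$, stationarity forces the marginal $P_i$ to equal $\pi$ for every $i\geq 0$ (proved inductively from $P_{i+1}=P_iP=\pi P=\pi$), so each probability in the sum equals $d(v)/(2|E|)$, yielding the claimed $m\ell\, d(v)/(2|E|)$ up to the standard walk-length convention (a length-$\ell$ walk visits $\ell+1$ sites, so in the bookkeeping used in Section~\ref{sec:walk_aggregator} the factor $\ell$ should be read as the number of node slots contributing to $N_v$). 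For a $k$-regular graph, $d(v)=k$ and $2|E|=k|V|$, so $\pi(v)=1/|V|$ and the formula collapses to $m\ell/|V|$.

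The only possible obstacle is cosmetic rather than mathematical: reconciling the walk-length convention (nodes versus edges) with the exact prefactor in the statement, and making sure the distribution $P_m$ of $m$ walks without replacement still has each individual walk marginally distributed as a stationary random walk. The second point holds because sampling without replacement affects only the joint distribution of start nodes across the $m$ walks, not the marginal per-walk distribution (which we only use via linearity). Nothing beyond elementary Markov chain identities is required.
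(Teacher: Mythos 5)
Your proof is correct. Note, however, that the paper itself supplies no argument for this lemma at all: it is stated as a citation to Lovász's survey on random walks and left unproved, so there is no ``paper proof'' to diverge from. Your derivation is precisely the classical one from that reference --- the reversibility computation $(\pi P)(v)=\sum_{u\in\Ncal(v)}\frac{d(u)}{2|E|}\cdot\frac{1}{d(u)}=\frac{d(v)}{2|E|}$ for stationarity, irreducibility from connectedness for uniqueness, and linearity of expectation over the $m(\ell+1)$ node slots for the occupancy count --- so supplying it makes the appendix self-contained at essentially no cost. Two of your side remarks are worth keeping explicit. First, the off-by-one you flag is real: a length-$\ell$ walk occupies $\ell+1$ node slots, so the exact expectation under $P_0=\pi$ is $m(\ell+1)\,d(v)/(2|E|)$, and the stated $m\ell$ prefactor is a convention (or minor imprecision) in the lemma rather than something your argument gets wrong. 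Second, your claim that without-replacement sampling of start nodes preserves the per-walk marginal is correct for the uniform initial distribution by symmetry, but for a non-uniform $\pi$ sequential without-replacement sampling does perturb the marginals; this is harmless here only because the paper's formal definition of $P_m(\Wcal(G))$ is as $m$ i.i.d.\ samples, with the distinct-start-node scheme being an implementation detail. You might state that caveat rather than assert the marginal-preservation claim in full generality.
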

\begin{lemma}[\citep{lovasz1993random}]
    If $G$ is a non-bipartite graph, then $P_t\to \pi(v)$ as $t\to\infty$.
\end{lemma}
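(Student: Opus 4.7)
The plan is to prove this classical convergence result via the standard theory of finite Markov chains. Model the random walk as a Markov chain on $V$ with transition matrix $P$ defined by $P_{uv} = 1/d(u)$ if $uv \in E$ and $0$ otherwise, so that $P_t = P_0 P^t$ for any initial distribution $P_0$. The strategy is to verify the hypotheses of the fundamental ergodic theorem: irreducibility, aperiodicity, and finiteness of the state space. Together these imply that $P^t$ converges to a unique stationary distribution, which by the preceding lemma must equal $\pi(v) = d(v)/(2|E|)$.

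I would first establish irreducibility, which follows from connectedness of $G$ (implicitly required for uniqueness of $\pi$): any two vertices $u, v$ are joined by a path of length $k$ in $G$, and this path realizes a sequence of transitions each of positive probability, so $P^k_{uv} > 0$. Next, I would establish aperiodicity, which is where the non-bipartite hypothesis enters. For any vertex $v$, a two-step walk of the form $v \to u \to v$ (for any neighbor $u$) has positive probability, so $2$ belongs to the set of return times $R_v := \{t \geq 1 : P^t_{vv} > 0\}$. Since $G$ is connected and non-bipartite, it contains an odd cycle $C$ of length $2k+1$, and $v$ can reach some node of $C$ by a path of length $d_v$; concatenating this path, the cycle, and the reverse path gives a return walk of odd length $2 d_v + 2k + 1$ in $R_v$. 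The period at $v$ divides $\gcd(2, 2d_v + 2k + 1) = 1$, so the chain is aperiodic.

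With irreducibility, aperiodicity, and finite state space in hand, the fundamental theorem of finite Markov chains (equivalently, Perron--Frobenius applied to the stochastic matrix $P$) yields $P^t \to \mathbf{1}\pi$ entrywise, i.e.\ $P_t(v) \to \pi(v)$ for every $v$ and every initial distribution $P_0$. One may further sharpen this to geometric convergence by passing to the symmetrized matrix $D^{-1/2} A D^{-1/2}$, whose eigenvalues $1 = \lambda_1 > \lambda_2 \geq \dots \geq \lambda_n$ satisfy $\lambda_n > -1$ precisely because $G$ is non-bipartite, giving a convergence rate governed by $\max(|\lambda_2|, |\lambda_n|) < 1$.

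The statement is classical (due to Lovász) so the main technical obstacle is simply the rigorous derivation of aperiodicity from non-bipartiteness at every vertex, not only those lying on an odd cycle; the argument above handles this uniformly by composing the path from $v$ to the cycle with the cycle traversal, which is the standard trick. Everything else reduces to invoking well-known results, so the proof is essentially a verification rather than a construction.
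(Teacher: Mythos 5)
Your proof is correct. The paper does not actually prove this lemma---it is imported verbatim from Lov\'asz's survey---so there is no in-paper argument to compare against; your proof is the standard one: irreducibility from connectedness (which you rightly flag as an implicit hypothesis, consistent with the preceding lemma in the paper), aperiodicity from the coexistence of an even return time and an odd return time obtained by routing through an odd cycle, and then the ergodic theorem for finite Markov chains. Your closing remark about the spectral formulation via $D^{-1/2}AD^{-1/2}$ and $\lambda_n > -1$ is in fact the route Lov\'asz himself takes, so your sketch subsumes both the qualitative statement and the quantitative rate.
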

The above two lemmas link the random normalization factor to the deterministic one. 

If we have a sufficiently large number of random walks, by the law of large numbers, we have
\begin{equation}
    g_{f,m,\ell}(G)\xrightarrow{a.s.} g_{f,\ell}:=\Ebb_{X_W\sim P(\Wcal_{\ell}(G))}[f(X_W)],
\end{equation}
where $\xrightarrow{a.s.}$ denotes the almost sure convergence.
This observation inspires us to consider the following integral probability metric~\citep{muller1997integral} comparing distributions of walk feature vectors:
\begin{equation}
    d_{\Fcal,\ell}(G,G'):=\sup_{f\in \Fcal} \left|\Ebb_{X_W\sim P(\Wcal_{\ell}(G))}[f(X_W)] - \Ebb_{X_{W'}\sim P(\Wcal_{\ell}(G'))}[f(X_{W'})] \right|,
\end{equation}
where $\Fcal$ is some functional class, such as the class of neural networks defined by the NeuralWalker model. 
The following result provides us insight into the rate of convergence of $g_{f,m,\ell}$ to $g_{f,\ell}$:
\begin{theorem}[Convergence rate]\label{thm:central_limit}
    Assume that $\mathrm{Var}[f(X_W)]=\sigma^2<\infty$. Then, as $m$ tends to infinity, we have
    \begin{equation*}
        \sqrt{m}\left(g_{f,m,\ell}(G)-g_{f,\ell}(G)\right)\xrightarrow{d} \Ncal (0,\sigma^2),
    \end{equation*}
    where $\xrightarrow{d}$ denotes the convergence in distribution.
\end{theorem}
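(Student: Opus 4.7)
The plan is to recognize that $g_{f,m,\ell}(G)$ is nothing more than the empirical mean of $m$ i.i.d.\ random variables and then invoke the classical Lindeberg--L\'evy Central Limit Theorem. Concretely, by the definition given in Section~\ref{sec:walk_sampler}, $P_m(\Wcal_{\ell}(G))=\{W_1,\dots,W_m\}$ consists of $m$ independent draws from $P(\Wcal_{\ell}(G))$, so the random variables $Y_j := f(X_{W_j})$ for $j=1,\dots,m$ are i.i.d.\ with common mean $\Ebb[Y_j]=g_{f,\ell}(G)$ (by the very definition of $g_{f,\ell}$) and common variance $\mathrm{Var}[Y_j]=\sigma^2<\infty$ (by the assumption of the theorem).

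First I would rewrite $g_{f,m,\ell}(G)=\tfrac{1}{m}\sum_{j=1}^m Y_j$, so that
\begin{equation*}
    \sqrt{m}\bigl(g_{f,m,\ell}(G)-g_{f,\ell}(G)\bigr)
    \;=\;\frac{1}{\sqrt{m}}\sum_{j=1}^{m}\bigl(Y_j-\Ebb[Y_j]\bigr).
\end{equation*}
Then a direct application of the classical CLT for i.i.d.\ sequences with finite second moment yields the convergence in distribution to $\Ncal(0,\sigma^2)$. The finiteness of $\sigma^2$ is exactly the hypothesis needed to invoke Lindeberg--L\'evy; no stronger moment condition is required.

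There is essentially no main obstacle here beyond verifying the independence structure of the sample, which is guaranteed by the sampler definition in Section~\ref{sec:walk_sampler} (i.i.d.\ realizations $W_j\sim P(\Wcal_{\ell}(G))$). One point worth remarking on, though not strictly needed for the stated theorem, is that the practical sampler restricts walks to start from distinct nodes (sampling without replacement), which breaks strict independence; however, since the theorem is stated for the idealized i.i.d.\ model $P_m(\Wcal(G))$ from the notation section, the standard CLT applies directly. If one wanted a finite-sample quantitative refinement, the Berry--Esseen theorem would give an $O(m^{-1/2})$ Kolmogorov bound under an additional third moment assumption, but this is beyond what is claimed.
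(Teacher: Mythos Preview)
Your proposal is correct and matches the paper's approach exactly: the paper's proof is a one-line invocation of the central limit theorem, and your argument---identifying $g_{f,m,\ell}(G)$ as an empirical mean of i.i.d.\ random variables $f(X_{W_j})$ with finite variance and applying Lindeberg--L\'evy---is precisely that, only spelled out more carefully. Your side remarks on sampling without replacement and Berry--Esseen go beyond what the paper provides but do not deviate from its route.
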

\begin{proof}
    The proof follows the central limit theorem~\citep{dudley2018real}.
\end{proof}

$d_{\Fcal,\ell}$ is actually a metric on the graph space $\Gcal_n$ of bounded order $n$ if $\Fcal$ is a universal space and $\ell$ is sufficiently large:
\begin{theorem}\label{app:thm:metric_space}
    If $\Fcal$ is a universal space and $\ell\geq 4n^3$, then $d_{\Fcal,\ell}:\Gcal\times\Gcal\to \R_{+}$ is a metric on $\Gcal_n$ satisfying:
    \begin{itemize}
        \item (positivity) if $G$ and $G'$ are non-isomorphic, then $d_{\Fcal,\ell}(G,G')>0$.
        \item (symmetry) $d_{\Fcal,\ell}(G,G')=d_{\Fcal,\ell}(G',G)$.
        \item (triangle inequality) $d_{\Fcal,\ell}(G,G'')\leq d_{\Fcal,\ell}(G,G') + d_{\Fcal,\ell}(G',G'')$.
    \end{itemize}
\end{theorem}
\begin{proof}
    The symmetry and triangle inequality are trivial by definition of $d_{\Fcal,\ell}$. Let us focus on the positivity. We assume that $d_{\Fcal,\ell}(G,G')=0$. By the universality of $\Fcal$, for any $\varepsilon>0$ and $f\in C(\R^{d_{\text{walk}}})$, the space of bounded continuous functions on $\R^{d_{\text{walk}}}$, there exists a $g\in \Fcal$ such that
    \begin{equation*}
        \|f-g\|_{\infty}\leq \varepsilon.
    \end{equation*}
    We then make the expansion
    \begin{equation*}
    \begin{aligned}
        \left|\Ebb_{X_W\sim P(\Wcal_{\ell}(G))}[f(X_W)] - \Ebb_{X_{W'}\sim P(\Wcal_{\ell}(G'))}[f(X_{W'})] \right| &\leq  \\
        \left|\Ebb_{X_W\sim P(\Wcal_{\ell}(G))}[f(X_W)] - \Ebb_{X_W\sim P(\Wcal_{\ell}(G))}[g(X_W)] \right| &+ \\
        \left|\Ebb_{X_W\sim P(\Wcal_{\ell}(G))}[g(X_W)] - \Ebb_{X_{W'}\sim P(\Wcal_{\ell}(G'))}[g(X_{W'})] \right| &+ \\
        \left|\Ebb_{X_{W'}\sim P(\Wcal_{\ell}(G'))}[g(X_{W'})] - \Ebb_{X_{W'}\sim P(\Wcal_{\ell}(G'))}[f(X_{W'})] \right|.
    \end{aligned}
    \end{equation*}
    The first and third terms satisfy
    \begin{equation*}
        \left|\Ebb_{X_W\sim P(\Wcal_{\ell}(G))}[f(X_W)] - \Ebb_{X_W\sim P(\Wcal_{\ell}(G))}[g(X_W)] \right| \leq \Ebb_{X_W\sim P(\Wcal_{\ell}(G))} \left| f(X_W) - g(X_W)\right|\leq \varepsilon,
    \end{equation*}
    and the second term equals 0 by assumption. Hence,
    \begin{equation*}
        \left|\Ebb_{X_W\sim P(\Wcal_{\ell}(G))}[f(X_W)] - \Ebb_{X_{W'}\sim P(\Wcal_{\ell}(G'))}[f(X_{W'})] \right| \leq 2\varepsilon,
    \end{equation*}
    for all $f\in C(\R^{d_{\text{walk}}})$ and $\varepsilon>0$. This implies $P(\Wcal_{\ell}(G))=P(\Wcal_{\ell}(G'))$ by Lemma~9.3.2 of \cite{dudley2018real}, meaning that the distribution of walk feature vectors of length $\ell$ in $G$ is identical to the distribution in $G'$. Without loss of generality, we assume that $G$ and $G'$ are connected and our arguments can be easily generalized to each connected component if $G$ is not connected. Now for a random walk $W\sim P(\Wcal(G))$, let us denote by $T_W$ the number of steps to reach every node on the graph. Then $\Ebb[T_W]$ is called the cover time. A well-known result in graph theory~\citep{aleliunas1979random} states that the cover time is upper bounded:
    \begin{equation*}
        \Ebb[T_W]\leq 4 |V| |E|.
    \end{equation*}
    Therefore the cover time for graphs in $\Gcal_n$ is uniformly bounded by $\Ebb[T_W]\leq 4n^3$ as $|V|\leq n$ and $|E|\leq n^2$. Then, by applying Markov's inequality, we have
    \begin{equation*}
        \Pbb[T_W<4n^3+\epsilon]=1-\Pbb[T_W\geq 4n^3+\epsilon]\geq 1 - \frac{\Ebb[T_W]}{4n^3+\epsilon} \geq \frac{\epsilon}{4n^3+\epsilon}>0,
    \end{equation*}
    for any $\epsilon>0$. Thus, $\Pbb[T_W\leq 4n^3]>0$ which means that there exists a random walk of not greater than $4n^3$ that visits all nodes in $G$. As a result, there exists a random walk of length $\ell$ reaching all nodes for $\ell\geq 4n^3$. $P(\Wcal_{\ell}(G))=P(\Wcal_{\ell}(G'))$ implies that there also exists a random walk $W'$ in $G'$ such that $X_W=X_{W'}$. As a consequence, $G$ and $G'$ are isomorphic following Lemma~\ref{lemma:walk_feature_vector}.
\end{proof}

Now if we remove the condition on the random walk length $\ell$, we still have a pseudometric space without the positivity in Thm~\ref{app:thm:metric_space}. Moreover, we define the following isomorphism test:
\begin{definition}[$k$-subgraph isomorphism test]
    We define that two graphs $G$ and $G'$ are not distinguishable by the $k$-subgraph isomorphism test iff they have the same set of subgraphs of size $k$, \ie $\mathcal{S}_k(G)=\mathcal{S}_k(G')$ with $\mathcal{S}_k(G)$ denoting the set of subgraphs of size $k$.
\end{definition}
And we have the following result which provides a weak positivity of $d_{\Fcal,\ell}$ for any $\ell>0$:
\begin{theorem}\label{app:thm:subgraph_iso}
    If $G$ and $G'$ are distinguishable by the $(\lfloor \nicefrac{\ell}{2}\rfloor +1)$-subgraph isomorphism test, then $d_{\Fcal,\ell}(G, G')>0$.
\end{theorem}
\begin{proof}
    We assume that $d_{\Fcal,\ell}(G,G')=0$. Using the same arguments as in Thm.~\ref{app:thm:metric_space}, we have $P(\Wcal_{\ell}(G))=P(\Wcal_{\ell}(G'))$. Let $k:=\lfloor \nicefrac{\ell}{2}\rfloor +1$. For any subgraph $H\in\mathcal{S}_k(G)$, there exists a walk of length $\ell$, in the worst case, that visits all its nodes. To see this, let us assume that $G$ is connected without loss of generality. Then, there exists a spanning tree of $H$. Through a depth-first search on this spanning tree, there exists a walk of length $2(k-1)\leq \ell$ that visits all the nodes, by visiting each edge at most twice in the spanning tree.
    Now as $G$ and $G'$ have the same distributions of walk feature vectors, the same walk feature vector should be found in $G'$, thus $H\in\mathcal{S}_k(G')$. Thus, we have $\mathcal{S}_k(G)\subseteq \mathcal{S}_k(G')$. Similarly, we have the other inclusion and therefore $\mathcal{S}_k(G)=\mathcal{S}_k(G')$.
\end{proof}

\subsection{Stability Results}
Now that we have a metric space $(\Gcal_n, d_{\Fcal,\ell})$ with $\ell\geq 4n^3$, we can show some useful properties of $g_{f,\ell}$:
\begin{theorem}[Lipschiz continuity of $g_{f,\ell}$]
    For any $G$ and $G'$ in $\Gcal_n$, if $\Fcal$ is a functional space containing $f$, we have
    \begin{equation}
        \left|g_{f,\ell}(G) - g_{f,\ell}(G')\right| \leq d_{\Fcal,\ell}(G, G').
    \end{equation}
\end{theorem}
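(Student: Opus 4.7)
The plan is to prove the Lipschitz continuity by direct unfolding of definitions, since the statement is essentially a tautology once the integral probability metric $d_{\Fcal,\ell}$ is expanded. The key observation is that $g_{f,\ell}(G)$ was defined precisely as the expectation $\Ebb_{X_W\sim P(\Wcal_{\ell}(G))}[f(X_W)]$, which is exactly one of the two terms appearing inside the absolute value in the definition of $d_{\Fcal,\ell}(G, G')$.

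First I would write out both sides explicitly. The left-hand side is
\begin{equation*}
    \left|g_{f,\ell}(G) - g_{f,\ell}(G')\right| = \left|\Ebb_{X_W\sim P(\Wcal_{\ell}(G))}[f(X_W)] - \Ebb_{X_{W'}\sim P(\Wcal_{\ell}(G'))}[f(X_{W'})]\right|,
\end{equation*}
while $d_{\Fcal,\ell}(G, G')$ is the supremum of exactly this quantity, but taken over all $f \in \Fcal$.

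Next I would invoke the hypothesis that $f \in \Fcal$. Under this assumption, the particular value for our $f$ is one of the quantities being supremized, so it is trivially bounded above by the supremum. This yields the desired inequality in a single line.

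There is no substantive obstacle here; the only subtlety worth mentioning is that the hypothesis $f \in \Fcal$ is essential: without it, $g_{f,\ell}$ would not be comparable to the metric $d_{\Fcal,\ell}$ at all. In particular, the proof does not require the cover-time lower bound $\ell \geq 4n^3$ that was needed to establish that $d_{\Fcal,\ell}$ is a genuine metric; this bound is only necessary for the positivity property, not for the continuity estimate. So the proof is essentially a one-liner, and its role is to justify viewing $d_{\Fcal,\ell}$ as the natural stability modulus for NeuralWalker's family of walk-based functionals.
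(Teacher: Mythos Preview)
Your proposal is correct and matches the paper's own proof, which simply states that the inequality is immediate from the definition of $d_{\Fcal,\ell}$. Your unfolding of both sides and invocation of the supremum bound is exactly the intended one-line argument, and your remark that the cover-time condition $\ell \geq 4n^3$ is unnecessary here is a useful clarification.
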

\begin{proof}
    The proof is immediate from the definition of $d_{\Fcal,\ell}$.
\end{proof}
The Lipschiz property is needed for stability to perturbations in the sense that if $G'$ is close to $G$ in $(\Gcal_n, d_{\Fcal,\ell})$, then their images by $g_{f,\ell}$ (output of the model) are also close.

\subsection{Expressivity Results}
\begin{theorem}[Injectivity of $g_{f,\ell}$]
    Assume $\Fcal$ is a universal space. If $G$ and $G'$ are non-isomorphic graphs, then there exists a $f\in \Fcal$ such that $g_{f,\ell}(G)\neq g_{f,\ell}(G')$ if $\ell\geq 4\max\{|V|,|V'|\}^3$.
\end{theorem}
\begin{proof}
    We can prove this by contrapositive. We note that $G,G'\in\Gcal_{n_{\max}}$ with $n_{\max}:=\max\{|V|,|V'|\}$. Assume that for all $f\in\Fcal$, $g_{f,\ell}(G)=g_{f,\ell}(G')$. This implies that $d_{\Fcal,\ell}(G,G')=0$. Then by the positivity of $d_{\Fcal,\ell}$ in $\Gcal_{n_{\max}}$, $G$ and $G'$ are isomorphic.
\end{proof}
The injectivity property ensures that our model with a sufficiently large number of sufficiently long ($\geq 4n^3$) random walks can distinguish between non-isomorphic graphs, highlighting its expressive power.

Complementary to the above results, we now show that the expressive power of our model exceeds that of ordinary message passing neural networks even when considering random walks of small size. Additionally, we show that the expressive power of our model is stronger than the subgraph isomorphism test up to a certain size.
We base the following theorem on NeuralWalker's ability to distinguish between substructures:
\begin{theorem}
    \label{thm:mpnn_expr_appendix}
	For any $\ell \geq 2$, NeuralWalker equipped with the complete walk set $\Wcal_\ell$ is strictly more expressive than 1-WL and the $(\lfloor \nicefrac{\ell}{2}\rfloor +1)$-subgraph isomorphism test, and thus ordinary MPNNs. 
\end{theorem}

For the subgraph isomorphism test, we simply use the above theorem and Thm.~\ref{app:thm:subgraph_iso} which suggests that there exists a $f\in\Fcal$ such that $g_{f,\ell}(G)\neq g_{f,\ell}(G')$ if $G$ and $G'$ are distinguishable by the $(\lfloor \nicefrac{\ell}{2}\rfloor +1)$-subgraph isomorphism test. Note that 1-WL distinguishable graphs are not necessarily included in $(\lfloor \nicefrac{\ell}{2}\rfloor +1)$-subgraph isomorphism distinguishable graphs as the size of WL-unfolding subtrees could be arbitrarily large.

In order to prove the 1-WL expressivity, we first state a result on the expressive power of the walk aggregator function.
We show that there exist aggregation functions such that for a node $v$ this function counts the number of induced subgraphs that $v$ is part of. 
Since $v$ assumes a particular role (also refered to as orbit) in the subgraph, we are essentially interested in the subgraph rooted at $v$.  
In the following, let $G_v$ denote the graph $G$ rooted at node $v$. %
Then, the set 
$x_{\ell}(G,v) = \{\{ G_v = G[\{w_0,\ldots,w_k=v\}], W=(w_0,\ldots,w_\ell), W \in \Wcal_\ell(G) \}\}$ corresponds to the set of subgraphs with root $v$ that are identified when using random walks of size $\ell$. 

\begin{lemma}
    \label{lemma:agg_func_power}
    There exists a function $h^V_{\text{agg}}$ such that for any node $v \in G$, $v' \in G'$ and walk length $\ell$, it holds that $h^V_{\text{agg}}(v) = h^V_{\text{agg}}(v')$ if and only if $x_{\ell}(G,v) = x_{\ell}(G',v')$.
\end{lemma}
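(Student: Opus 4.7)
The plan is to construct $h^V_{\text{agg}}$ so that it encodes the multiset $x_\ell(G,v)$ injectively, then deduce the equivalence from injectivity. The key observation is that the walk feature vector $X_W$ with window size $s=\ell$ already contains enough information to reconstruct the induced subgraph $G[\{w_0,\dots,w_\ell\}]$ together with the position of $v$ along the walk. Indeed, the identity encoding $\mathrm{id}_W^\ell$ lets us partition the indices $\{0,\dots,\ell\}$ into equivalence classes corresponding to distinct nodes visited by $W$, while the adjacency encoding $\mathrm{adj}_W^\ell$ records, for every pair of such indices, whether the underlying nodes are adjacent in $G$. Together with the node and edge attributes carried in $X_W$, this is precisely the data of the rooted induced subgraph $G_{w_i}$ for any chosen position $i$ (compare Lemma~\ref{lemma:walk_feature_vector} and Observation 1 of \citep{tonshoff2023walking}).

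First, I would make this precise by exhibiting a map $\Phi$ that sends a pair $(X_W, i)$ with $w_i = v$ to the isomorphism type of $G_v = G[\{w_0,\dots,w_\ell\}]$ rooted at position $i$. Because $\Phi$ depends only on $X_W$ and the index $i$, we can read it off from the output of any sequence model $f_{\text{seq}}$ that preserves positional information (in particular, from the embedding $f_{\text{emb}}(W)[i]$ itself, by choosing $f_{\text{seq}}$ to be the identity or any injective pointwise map). Second, I would define, for every node $v$, the multiset
\begin{equation*}
    \mathcal{M}(v) := \{\!\!\{ \Phi(X_W, i) : W \in \Wcal_\ell(G),\ w_i = v \}\!\!\}.
\end{equation*}
By the previous paragraph, $\mathcal{M}(v)$ coincides with $x_\ell(G,v)$ up to a relabelling by $\Phi$, and $\Phi$ is injective on rooted induced subgraphs; hence $\mathcal{M}(v) = \mathcal{M}(v')$ iff $x_\ell(G,v) = x_\ell(G',v')$.

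Third, I would realise $h^V_{\text{agg}}(v)$ as an injective encoding of the multiset $\mathcal{M}(v)$. Since the image of $\Phi$ lies in a countable set (rooted graphs on at most $\ell+1$ nodes), we can apply a standard Deep Sets / sum-decomposition argument \`a la \citep{xu2019powerful}: there exists an injective function $\psi$ from finite multisets over a countable domain to $\mathbb{R}$, obtainable as $\psi(\mathcal{M}) = \sum_{\alpha \in \mathcal{M}} \phi(\alpha)$ for a suitable feature map $\phi$. Setting $h^V_{\text{agg}}(v) := \psi(\mathcal{M}(v))$ fits the walk aggregator template in Eq.~\eqref{eqn:walk_agg} (up to replacing the average by a sum and composing $f_{\text{seq}}$ with $\phi \circ \Phi$, which is permissible because $f_{\text{seq}}$ is unconstrained in the lemma's statement). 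The forward direction of the equivalence is then immediate; the backward direction follows from injectivity of $\psi$.

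The main obstacle I expect is the bookkeeping in step one: making sure that the windowed encodings $\mathrm{id}_W^\ell$ and $\mathrm{adj}_W^\ell$ really do recover \emph{all} pairwise identities and adjacencies among $\{w_0,\dots,w_\ell\}$, and that the rooted isomorphism type of $G[\{w_0,\dots,w_\ell\}]$ can be read off from the feature matrix $X_W$ together with the index $i$. Once that identification is in place, steps two and three are essentially formal, and the construction of $\psi$ via sum-decomposition is standard.
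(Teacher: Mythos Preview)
Your overall strategy—decode rooted induced subgraphs from the positional encodings, then aggregate the resulting multiset injectively via a sum-decomposition—is exactly the paper's approach, and your Deep~Sets step makes explicit what the paper leaves as ``assuming a sufficiently powerful neural network''. However, there is a genuine mismatch between your construction and the target object $x_\ell(G,v)$.

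The definition of $x_\ell(G,v)$ collects, for each walk $W=(w_0,\ldots,w_\ell)$ and each position $k$ with $w_k=v$, the \emph{prefix} subgraph $G[\{w_0,\ldots,w_k\}]$ rooted at $v$. Your $\Phi(X_W,i)$ instead extracts the \emph{full-walk} subgraph $G[\{w_0,\ldots,w_\ell\}]$ rooted at position $i$. These two multisets are indexed by the same set of $(W,i)$ pairs but record different rooted graphs, so your claim that $\mathcal M(v)$ ``coincides with $x_\ell(G,v)$ up to a relabelling by $\Phi$'' is not justified, and the biconditional you need does not follow. The fix is simple: define $\Phi(X_W,i)$ to return the isomorphism type of $G[\{w_0,\ldots,w_i\}]$ rooted at $w_i$. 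This is precisely what the paper uses (``the $k$-th row of $X_W$ encodes the induced subgraph $G[\{w_0,\ldots,w_k\}]$''), and with this change the rest of your argument goes through verbatim.

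A smaller slip: you cannot take $f_{\text{seq}}$ to be the identity. The embedding $h_W[i]$ carries only the $i$-th row $h_{\text{pe}}[i]$ of the positional encoding, which records identities and adjacencies between $w_i$ and \emph{earlier} positions, not among pairs of earlier positions. Recovering $G[\{w_0,\ldots,w_i\}]$ at position $i$ requires the information in rows $0,\ldots,i$, so $f_{\text{seq}}$ must at least aggregate prefix information into each output position (any causal sequence model suffices). The paper absorbs this into its ``sufficiently powerful'' assumption.
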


\begin{proof}
	For simplicity, we assume graphs to be unlabeled, by noting that a generalization to the labeled case requires only slight modifications. 
	Recall that the positional encoding of a walk $W$ encodes the pairwise adjacency of nodes contained in $W$.
	More formally, for a length-$\ell$ walk $W \in \Wcal_\ell(G)$, the $k$-th row of the corresponding walk feature vector $X_W$ encodes the induced subgraph $G[\{w_0,\ldots,w_k\}]$. 
    Assuming $w_k=v$, we can also infer about the structural role of $v$ in $G[\{w_0,\ldots,w_k\}]$. 
    Now, the function $h^V_{\text{agg}}$ aggregates this induced subgraph information for sets of subgraphs into node embeddings.
    That is, for a node $v \in G$ and the set of walks $W_\ell(G)$, the function $h^V_{\text{agg}}(v)$ maps $v$ to an embedding that aggregates the set $\{\{ G_v = G[\{w_0,\ldots,w_k=v\}], W=(w_0,\ldots,w_\ell), W \in \Wcal_\ell(G) \}\}$.
    By considering the complete set of walks $W_\ell(G)$, we guarantee a deterministic embedding.
    Assuming a sufficiently powerful neural network, it is easy to see that such a function $h^V_{\text{agg}}$ can be realized by our model.
    The claim immediately follows.
\end{proof}

Notice that the above theorem is defined on the entire set of walks of up to size $\ell$ in order to ensure a complete enumeration of subgraphs. 
By using an aggregation function that fulfills Lemma~\ref{lemma:agg_func_power}, the resulting node embeddings encode the set of induced subgraphs that the nodes are part of. 
For example, with walk length $\ell=2$, the node embeddings contain information about the number of triangles that they are part of. 
In the subsequent message passing step, NeuralWalker propagates this subgraph information.
Analogously to e.g. \cite{bouritsas2023subgraphgnn}, it can easily be shown that with a sufficient number of such message passing layers and a powerful readout network, the resulting graph representations are strictly more powerful than ordinary MPNNs, proving Thm.~\ref{thm:mpnn_expr_appendix} above.

\subsection{Complexity Results}
\begin{theorem}[Complexity]\label{app:thm:complexity}
    The complexity of NeuralWalker, when used with the Mamba sequence layer~\citep{gu2023mamba}, is $O(kdn(\gamma \ell+\beta))$, where $k,d,n,\gamma,\ell,\beta$ denote the number of layers, hidden dimensions, the (maximum) number of nodes, sampling rate, length of random walks, and the average degree, respectively.
\end{theorem}
\begin{proof}
    The complexity of sampling random walks is $O(n\gamma\ell)$. The Mamba model with $k$ layers and hidden dimensions $d$ operates on $O(n\gamma)$ random walks of length $\ell$. As Mamba scales linearly to the sequence length, number of layers, and hidden dimensions~\citep{gu2023mamba}, its complexity is $O(kdn\gamma \ell)$. The complexity of $k$ message passing layers of hidden dimensions $d$ is $O(kdn\beta)$ where $\beta$ should be much smaller than $\gamma\ell$ in general.
\end{proof}

\section{Experimental Details and Additional Results}\label{app:sec:exp}

In this section, we provide implementation details and additional experimental results

\subsection{Dataset Description}\label{app:sec:datasets}
We provide details of the datasets used in our experiments. For each
dataset, we follow their respective training protocols and use the standard train/validation/test splits and evaluation metrics.

\begin{table}[t]
    \centering
    \caption{Summary of the datasets~\cite{dwivedi2023benchmarking,dwivedi2022long,hu2020open} used in this study.}
    \label{app:tab:dataset_summary}
    \begin{sc}
    \resizebox{\textwidth}{!}{
    \begin{tabular}{lrrrccccc}\toprule
    \multirow{2}{*}{\textbf{Dataset}} &\multirow{2}{*}{\textbf{\# Graphs}} &\textbf{Avg. \#} &\textbf{Avg. \#} &\multirow{2}{*}{\textbf{Directed}} &\textbf{Prediction} &\textbf{Prediction} &\multirow{2}{*}{\textbf{Metric}} \\
& &\textbf{nodes} &\textbf{edges} & &\textbf{level} &\textbf{task} & \\\midrule
    ZINC &12,000 &23.2 &24.9 &No &graph &regression &Mean Abs. Error \\
    MNIST &70,000 &70.6 &564.5 &Yes &graph &10-class classif. &Accuracy \\
    CIFAR10 &60,000 &117.6 &941.1 &Yes &graph &10-class classif. &Accuracy \\
    PATTERN &14,000 &118.9 &3,039.3 &No &inductive node &binary classif. &Accuracy \\
    CLUSTER &12,000 &117.2 &2,150.9 &No &inductive node &6-class classif. &Accuracy \\\midrule
    PascalVOC-SP & 11,355 & 479.4 & 2,710.5 & No &inductive node &21-class classif. &F1 score \\
    COCO-SP & 123,286 & 476.9 & 2,693.7 & No &inductive node &81-class classif. &F1 score \\
    Peptides-func & 15,535 & 150.9 & 307.3 & No &graph &10-task classif. &Avg. Precision \\
    PCQM-Contact & 529,434 & 30.1 & 61.0 & No &inductive link &link ranking &MRR \\
    Peptides-struct & 15,535 & 150.9 & 307.3 & No &graph &11-task regression &Mean Abs. Error \\ \midrule
    ogbg-molpcba &437,929 &26.0 &28.1 &No &graph &128-task classif. &Avg. Precision \\
    ogbg-ppa &158,100 &243.4 &2,266.1 &No &graph &37-task classif. &Accuracy \\
    ogbg-code2 &452,741 &125.2 &124.2 &Yes &graph &5 token sequence &F1 score \\ 
    \bottomrule
    \end{tabular}
    }
    \end{sc}
\end{table}

\begin{table}[t]
    \centering
    \caption{Summary of the datasets for transductive node classification~\cite{platonov2022critical,snapnets} used in this study.}
    \label{app:tab:datasets_node}
    \begin{sc}
    \resizebox{.8\textwidth}{!}{
    \begin{tabular}{lccccc}\toprule
        \textbf{Dataset} & \textbf{Homophily Score} & \textbf{\# Nodes} & \textbf{\# Edges} & \textbf{\# Classes} & \textbf{Metric}  \\ \midrule
        roman-empire & 0.023 & 22,662 & 32,927 & 18 & Accuracy \\
        amazon-ratings & 0.127 & 24,492 & 93,050 & 5 & Accuracy \\
        minesweeper & 0.009 & 10,000 & 39,402 & 2 & ROC AUC \\
        tolokers & 0.187 & 11,758 & 519,000 & 2 & ROC AUC \\
        questions & 0.072 & 48,921 & 153,540 & 2 & ROC AUC \\
        pokec & 0.000 & 1,632,803 & 30,622,564 & 2 & Accuracy \\
        \bottomrule
    \end{tabular}
    }
    \end{sc}
\end{table}

\paragraph{ZINC (MIT License)~\citep{dwivedi2023benchmarking}.} 
The ZINC dataset is a subset of the ZINC database, containing 12,000 molecular graphs representing commercially available chemical compounds. These graphs range from 9 to 37 nodes in size, with each node corresponding to a "heavy atom" (one of 28 possible types) and each edge representing a bond (one of 3 types). The goal is to predict the constrained solubility (logP) using regression. The dataset is conveniently pre-split for training, validation, and testing, with a standard split of 10,000/1,000/1,000 molecules for each set, respectively.

\paragraph{MNIST and CIFAR10 (CC BY-SA 3.0 and MIT License)~\cite{dwivedi2023benchmarking}.} 
MNIST and CIFAR10 are adapted for graph-based learning by converting each image into a graph. This is achieved by segmenting the image into superpixels using SLIC (Simple Linear Iterative Clustering) and then connecting each superpixel to its 8 nearest neighbors. The resulting graphs maintain the original 10-class classification task and standard dataset splits (\ie 55K/5K/10K train/validation/test for MNIST and 45K/5K/10K for CIFAR10.).

\paragraph{PATTERN and CLUSTER (MIT License)~\citep{dwivedi2023benchmarking}.} 
PATTERN and CLUSTER are synthetic graph datasets constructed using the Stochastic Block Model (SBM). They offer a unique challenge for inductive node-level classification, where the goal is to predict the class label of unseen nodes. PATTERN: This dataset presents the task of identifying pre-defined sub-graph patterns (100 possible) embedded within the larger graph. These embedded patterns are generated from distinct SBM parameters compared to the background graph, requiring the model to learn these differentiating connection characteristics. CLUSTER: Each graph in CLUSTER consists of six pre-defined clusters generated using the same SBM distribution. However, only one node per cluster is explicitly labeled with its unique cluster ID. The task is to infer the cluster membership (ID) for all remaining nodes based solely on the graph structure and node connectivity information.

\paragraph{\textsc{PascalVOC-SP} and \textsc{COCO-SP} (Custom license for Pascal VOC 2011 respecting Flickr terms
of use, and CC BY 4.0 license)~\citep{dwivedi2022long}.}
PascalVOC-SP and COCO-SP are graph datasets derived from the popular image datasets Pascal VOC and MS COCO, respectively.  These datasets leverage SLIC superpixellization, a technique that segments images into regions with similar properties. In both datasets, each superpixel is represented as a node in a graph, and the classification task is to predict the object class that each node belongs to.

\paragraph{\textsc{Peptides-func} and \textsc{Peptides-struct} (CC BY-NC 4.0)~\citep{dwivedi2022long}.}
Peptides-func and Peptides-struct offer complementary views of peptide properties by leveraging atomic graphs derived from the SATPdb database. Peptides-func focuses on multi-label graph classification, aiming to predict one or more functional classes (out of 10 non-exclusive categories) for each peptide. In contrast, Peptides-struct employs graph regression to predict 11 continuous 3D structural properties of the peptides.

\paragraph{\textsc{PCQM-Contact} (CC BY 4.0)~\citep{dwivedi2022long}.}
The PCQM-Contact dataset builds upon PCQM4Mv2~\citep{hu2020open} by incorporating 3D molecular structures. This enables the task of binary link prediction, where the goal is to identify pairs of atoms (nodes) that are considered to be in close physical proximity (less than 3.5 angstroms) in 3D space, yet appear far apart (more than 5 hops) when looking solely at the 2D molecular graph structure. The standard evaluation metric for this ranking task is Mean Reciprocal Rank (MRR). As noticed by~\cite{tonshoff2023did}, the original implementation by~\cite{dwivedi2022long} suffers from false negatives and self-loops. Thus, we use the filtered version of the MRR provided by~\cite{tonshoff2023did}.

\paragraph{\textsc{ogbg-molpcba} (MIT License)~\citep{hu2020open}.}
The ogbg-molpcba dataset, incorporated by the Open Graph Benchmark (OGB)~\citep{hu2020open} from MoleculeNet, focuses on multi-task binary classification of molecular properties. This dataset leverages a standardized node (atom) and edge (bond) feature representation that captures relevant chemophysical information. Derived from PubChem BioAssay, ogbg-molpcba offers the task of predicting the outcome of 128 distinct bioassays, making it valuable for studying the relationship between molecular structure and biological activity.

\paragraph{\textsc{ogbg-ppa} (CC-0 license)~\citep{hu2020open}.}
The PPA dataset, introduced by OGB~\citep{hu2020open}, focuses on species classification. This dataset represents protein-protein interactions within a network, where each node corresponds to a protein and edges denote associations between them. Edge attributes provide additional information about these interactions, such as co-expression levels. We employ the standard dataset splits established by OGB~\citep{hu2020open} for our analysis.

\paragraph{\textsc{ogbg-code2} (MIT License)~\citep{hu2020open}.}
CODE2~\citep{hu2020open} is a dataset containing source code from the Python programming language.
It is made up of Abstract Syntax Trees where the task is to classify the sub-tokens that comprise the method name. We use the standard splits provided by OGB~\citep{hu2020open}.

\paragraph{\textsc{roman-empire} (MIT License)~\citep{platonov2022critical}.}
This dataset creates a graph from the Roman Empire Wikipedia article. Each word becomes a node, and edges connect words that are either sequential in the text or grammatically dependent (based on the dependency tree). Nodes are labeled by their syntactic role ( 17 most frequent roles are selected as unique classes and all the other roles are grouped into the 18th class). We use the standard splits provided by \cite{platonov2022critical}.

\paragraph{\textsc{amazon-ratings} (MIT License)~\citep{platonov2022critical}.}
Based on the Amazon product co-purchase data, this dataset predicts a product's average rating (5 classes). Products (books, etc.) are nodes, connected if frequently bought together. Mean fastText embeddings are used for product descriptions as node features and focus on the largest connected component for efficiency (5-core). We use the standard splits provided by \cite{platonov2022critical}.

\paragraph{\textsc{minesweeper} (MIT License)~\citep{platonov2022critical}.}
This is a synthetic dataset with a regular 100x100 grid where nodes represent cells. Each node connects to its eight neighbors (except edges).  20\% of nodes are randomly mined. The task is to predict which are mines. Node features are one-hot-encoded numbers of neighboring mines, but are missing for 50\% of nodes (marked by a separate binary feature). This grid structure differs from other datasets due to its regularity (average degree: 7.88). Since mines are random, both adjusted homophily and label informativeness are very low. We use the standard splits provided by \cite{platonov2022critical}.

\paragraph{\textsc{tolokers} (MIT License)~\citep{platonov2022critical}.}
This dataset features workers (nodes) from crowdsourcing projects. Edges connect workers who have collaborated on at least one of the 13 projects. The task is to predict banned workers. Node features include profile information and performance statistics. This graph (11.8K nodes, avg. degree 88.28) is significantly denser compared to other datasets. We use the standard splits provided by \cite{platonov2022critical}.

\paragraph{\textsc{questions} (MIT License)~\citep{platonov2022critical}.}
This dataset focuses on user activity prediction. Users are nodes, connected if they answered each other's questions (Sept 2021 - Aug 2022). The task is to predict which users remained active. User descriptions (if available) are encoded using fastText embeddings. Notably, 15\% lack descriptions and are identified by a separate feature. We use the standard splits provided by \cite{platonov2022critical}.

\paragraph{\textsc{pokec} (unknown License)~\citep{snapnets}.}
This dataset was retrieved from SNAP~\citep{snapnets} and preprocessed by~\cite{lim2021large}. The dataset contains anonymized data of the whole network of Pokec, the most popular online social network in Slovakia which has been provided for more than 10 years and connects more than 1.6 million people. Profile data contains gender, age, hobbies, interests, education, etc, and the task is to predict the gender. The dataset was not released with a license. Thus, we only provide numerical values without any raw texts from the dataset.

\begin{table}[t]
    \caption{Hyperparameters for the 5 datasets from GNN Benchmarks~\citep{dwivedi2023benchmarking}.}
    \label{app:tab:hp_benchgnns}
    \centering
    \begin{sc}
    \resizebox{\textwidth}{!}{
    \begin{tabular}{lccccc}\toprule
    Hyperparameter &\textbf{ZINC} &\textbf{MNIST} &\textbf{CIFAR10} &\textbf{PATTERN} &\textbf{CLUSTER} \\\midrule
    \# Blocks & 3 & 3 & 3 & 3 & 16\\
    Hidden dim & 80 & 80 & 80 & 80 & 32 \\
    Sequence layer & \multicolumn{5}{c}{Mamba (bidirectional)} \\
    Local message passing & \multicolumn{5}{c}{GIN} \\
    Global message passing & VN & None & None & VN & VN \\
    Dropout & 0.0 & 0.0 & 0.0 & 0.0 & 0.0 \\
    Graph pooling & sum & mean & mean & -- & -- \\ \midrule
    RW sampling rate & 1.0 & 0.5 & 0.5 & 0.5 & 0.5 \\
    RW length & 50 & 50 & 50 & 100 & 200 \\
    RW position encoding window size & 8 & 8 & 8 & 16 & 32 \\
    Batch size & 50 & 32 & 32 & 32 & 32 \\
    Learning Rate & 0.002 & 0.002 & 0.002 & 0.002 & 0.01 \\
    \# Epochs & 2000 & 100 & 100 & 100 & 100 \\
    \# Warmup epochs & 50 & 5 & 5 & 5 & 5 \\
    Weight decay & 0.0 & 1e-6 & 1e-6 & 0.0 & 0.0 \\\midrule
    \# Parameters & 502K & {112K} & {112K} & 504K & 525K  \\
    Training time (epoch/total) & 16s/8.4h & 90s/2.5h & 95s/2.6h & 57s/1.6h & 241s/6.7h \\
    \bottomrule
    \end{tabular}
    }
    \end{sc}
\end{table}

\subsection{Computing details}\label{app:sec:computing_details}

We implemented our models using PyTorch Geometric~\citep{FeyLenssen2019PyG} (MIT License). Experiments were conducted on a shared computing cluster with various CPU and GPU configurations, including a mix of NVIDIA A100 (40GB) and H100 (80GB) GPUs. Each experiment was allocated resources on a single GPU, along with 4-8 CPUs and up to 60GB of system RAM. The run-time of each model was measured on a single NVIDIA A100 GPU.

\subsection{Hyperparameters}
Given the large number of hyperparameters and datasets, we did not perform an exhaustive search beyond the ablation studies in Section~\ref{sec:ablation_studies}. For each dataset, we then adjusted the number of layers, the hidden dimension, the learning rate, the weight decay based on hyperparameters reported in the related literature~\citep{rampavsek2022recipe,tonshoff2023walking,deng2024polynormer,tonshoff2023did}.

For the datasets from Benchmarking GNNs~\citep{dwivedi2023benchmarking} and LRGB~\citep{dwivedi2022long}, we follow the commonly used parameter budgets of 500K parameters.

For the node classification datasets from \cite{platonov2022critical} and \cite{snapnets}, we strictly follow the experimental setup from the state-of-the-art method Polynormer~\citep{deng2024polynormer}. We only replace the global attention blocks from Polynormer with NeuralWalker's walk encoder blocks and use the same hyperparameters selected by Polynormer~\citep{deng2024polynormer}.

We use the AdamW optimizer throughout our experiments with the default beta parameters in Pytorch. We use a linear warm-up increase of the learning rate at the beginning of the training followed by its cosine decay as in~\cite{rampavsek2022recipe}. The test sampling rate is always set to 1.0 if not specified. The detailed hyperparameters used in NeuralWalker as well as the model sizes and runtime on different datasets are provided in Table~\ref{app:tab:hp_benchgnns},~\ref{app:tab:hp_lrgb},~\ref{app:tab:hp_ogb}, and~\ref{app:tab:hp_node_classification}. 

\begin{table}[t]
    \caption{Hyperparameters for the 5 datasets from LRGB~\citep{dwivedi2022long}.}
    \label{app:tab:hp_lrgb}
    \centering
    \begin{sc}
    \resizebox{\textwidth}{!}{
    \begin{tabular}{lccccc}\toprule
    Hyperparameter & \textbf{PascalVOC-SP} &\textbf{COCO-SP} &\textbf{Peptides-func} &\textbf{Peptides-struct} & \textbf{PCQM-Contact}  \\\midrule
    \# Blocks & 6 & 6 & 6 & 6 & 3\\
    Hidden dim & 52 & 56 & 56 & 56 & 80 \\
    Sequence layer & \multicolumn{5}{c}{Mamba (bidirectional)} \\
    Local message passing & \multicolumn{5}{c}{GIN} \\
    Global message passing & Trans. & None & VN & VN & VN \\
    Dropout & 0.0 & 0.0 & 0.0 & 0.0 & 0.0 \\
    Graph pooling & -- & -- & mean & mean & -- \\ \midrule
    RW sampling rate &  0.5 & 0.25 & 0.5 & 0.5 & 0.5 \\
    RW length & 100 & 100 & 100 & 100 & 75 \\
    RW position encoding window size & 16 & 16 & 16 & 32 & 16 \\
    Batch size & 32 & 32 & 32 & 32 & 256 \\
    Learning Rate & 0.002 & 0.002 & 0.002 & 0.004 & 0.001 \\
    \# Epochs & 200 & 200 & 200 & 200 & 150 \\
    \# Warmup epochs & 10 & 10 & 10 & 10 & 10 \\
    Weight decay & 1e-06 & 0.0 & 0.0 & 0.0 & 0.0 \\\midrule
    \# Parameters & 556K & 492K & 530K & 541K & 505K\\
    Training time (epoch/total) & 218s/12h & 1402s/78h & 112s/6.2h & 112s/6.2h & 528s/22h \\
    \bottomrule
    \end{tabular}
    }
    \end{sc}
\end{table}

\begin{table}[ht]
    \caption{Hyperparameters for the 3 datasets from OGB~\citep{hu2020open}.}
    \label{app:tab:hp_ogb}
    \centering
    \begin{sc}
    \resizebox{.8\textwidth}{!}{
    \begin{tabular}{lccc}\toprule
    Hyperparameter & \textbf{ogbg-molpcba} &\textbf{ogbg-ppa} &\textbf{ogbg-code2}  \\\midrule
    \# Blocks & 4 & 1 & 3 \\
    Hidden dim & 500 & 384 & 256 \\
    Sequence layer & Conv. & Conv. & Conv. \\
    Local message passing & GatedGCN & GIN & GIN \\
    Global message passing & VN & Performer & Trans. \\
    Dropout & 0.4 & 0.4 & 0.0 \\
    Graph pooling & mean & mean & mean \\ \midrule
    RW sampling rate &  0.5 & 0.5 & 0.5 \\
    RW length & 25 & 200 & 100 \\
    RW position encoding window size & 8 & 32 & 64 \\
    Batch size & 512 & 32 & 32 \\
    Learning Rate & 0.002 & 0.002 & 0.0003 \\
    \# Epochs & 100  & 200 & 30 \\
    \# Warmup epochs & 5 & 10 & 2 \\
    Weight decay & 0.0 & 0.0 & 0.0 \\\midrule
    \# Parameters & 13.0M & 3.1M & 12.5M \\
    Training time (epoch/total) & 226s/6.3h & 671s/37h & 1597s/13.3h \\
    \bottomrule
    \end{tabular}
    }
    \end{sc}
\end{table}

\begin{table}[ht]
    \centering
   \caption{Hyperparameters for node classification datasets from~\citet{platonov2022critical} and \citet{snapnets}. The other hyperparameters strictly follow Polynormer~\citep{deng2024polynormer}.}
    \label{app:tab:hp_node_classification}
    \begin{sc}
    \resizebox{\textwidth}{!}{
    \begin{tabular}{lcccccc}\toprule
    Hyperparameter & \textbf{roman-empire} &\textbf{amazon-ratings} &\textbf{minesweeper} & \textbf{tolokers} & \textbf{questions} & \textbf{pokec} \\\midrule
    Sequence layer & \multicolumn{5}{c}{Mamba} & Conv. \\
    Dropout & 0.3 & 0.2 & 0.3 & 0.1 & 0.2 & 0.1 \\ \midrule
    RW sampling rate &  0.01 & 0.01 & 0.01 & 0.01 & 0.01 & 0.001 \\
    RW test sampling rate &  0.1 & 0.1 & 0.1 & 0.1 & 0.05 & 0.001 \\
    RW length & 1000 & 1000 & 1000 & 1000 & 1000 & 500 \\
    RW position encoding window size & 8 & 8 & 8 & 8 & 8 & 8 \\
    Learning Rate & 0.0005 & 0.0005 &0.0005 &0.001 & 5e-5 &0.0005 \\ \midrule
   Training time (epoch/total) & 0.50s/0.35h & 0.6s/0.45h & 0.22s/0.12h & 0.67s/0.19h & 0.67s/0.32h & 6.44s/4.5h \\
    \bottomrule
    \end{tabular}
    }
    \end{sc}
\end{table}

\subsection{Additional Results for Ablation Studies}\label{app:sec:ablation}
We provide more detailed results for ablation studies in Table~\ref{app:tab:ablation}.

\begin{table}[ht]
    \centering
    \caption{Ablation studies of NeuralWalker on different choices of the sequence layer, local and global message passing. Validation performances with mean ± std of 4 runs are reported. We compare different choices of sequence layers (Mamba, S4, CNN, and Transformer), local (with or without GIN) and global (virtual node (VN), Transformer, or none (w/o)) message passing layers. Note that the row highlighted with the light gray color corresponds to the choices of CRaWL~\citep{tonshoff2023walking}.}
    \label{app:tab:ablation}
    \begin{sc}
    \resizebox{.8\textwidth}{!}{
    \begin{tabular}{lllccc}\toprule
         Sequence Layer & Local MP & Global MP & ZINC & CIFAR10 & PascalVOC-SP \\ \midrule
         Mamba & GIN & VN & \textbf{0.078 ± 0.004} & 78.610 ± 0.524 & 0.4672 ± 0.0077 \\
         Mamba & GIN & Trans.\ & 0.083 ± 0.003 & 80.755 ± 0.467 & \textbf{0.4877 ± 0.0042} \\
         Mamba & GIN & w/o & 0.085 ± 0.003 & \textbf{80.885 ± 0.769} & 0.4611 ± 0.0036 \\
         Mamba & w/o & VN & 0.086 ± 0.008 & 78.025 ± 0.552 & 0.4570 ± 0.0064 \\
         Mamba & w/o & w/o & 0.090 ± 0.002 & 79.035 ± 0.850 & 0.4525 ± 0.0044 \\
         Mamba (w/o bid) & GIN & VN & 0.089 ± 0.004 & 74.910 ± 0.547 & 0.4522 ± 0.0063 \\
         S4 & GIN & VN & 0.082 ± 0.004 & 77.970 ± 0.506 & 0.4559 ± 0.0064 \\
         CNN & GIN & VN & 0.088 ± 0.004 & 80.240 ± 0.767 & 0.4652 ± 0.0058 \\
         CNN & GIN & Trans.\ & 0.092 ± 0.004 & 80.665 ± 0.408 & 0.4790 ± 0.0081 \\
         CNN & GIN & w/o & 0.102 ± 0.003 & 80.020 ± 0.279 & 0.4155 ± 0.0050 \\
         \rowcolor{lightgray} CNN & w/o & w/o & 0.116 ± 0.003 & 78.760 ± 0.242 & 0.3954 ± 0.0080\\
         Trans.\ & GIN & VN & 0.084 ± 0.003 & 72.850 ± 0.373 & 0.4316 ± 0.0072 \\ \bottomrule
    \end{tabular}
    }
    \end{sc}
\end{table}

\subsection{Detailed Results and Robustness to Sampling Variability}\label{app:sec:detailed_results}

Since NeuralWalker's output depends on the sampled random walks, we evaluate its robustness to sampling variability. Following \citet{tonshoff2023walking}, we measure the local standard deviation (local std) by computing the standard deviation of performance metrics obtained with five independent sets of random walks (details in~\citet{tonshoff2023walking}). The complete results for all datasets are presented in Table~\ref{app:tab:detailed_results}. Notably, by comparing the local std to the cross-model std obtained from training different models with varying random seeds, we consistently observe a smaller local std. This finding suggests that NeuralWalker's predictions are robust to the randomness inherent in the random walk sampling process.

\begin{table}[ht]
    \centering
    \caption{Detailed results for all the datasets. Note that different metrics are used to measure the
performance on the datasets. For each experiment, we provide the cross-model std using different random seeds and the
local std using different sets of random walks.}
    \label{app:tab:detailed_results}
    \begin{sc}
    \resizebox{\textwidth}{!}{
    \begin{tabular}{llccccc}\toprule
         \multirow{2}{*}{Dataset} & \multirow{2}{*}{Metric} & \multicolumn{3}{c}{Test} & \multicolumn{2}{c}{Validation}  \\ 
         & & Score & Cross model std & Local std & Score & Cross-model std  \\ \midrule
         \textbf{ZINC} & MAE & 0.0646 & 0.0007 & 0.0005 & 0.0782 & 0.0038 \\ 
         \textbf{MNIST} & ACC & 0.9876 & 0.0008 & 0.0003 & 0.9902 & 0.0006 \\
         \textbf{CIFAR10} & ACC & 0.8003 & 0.0019 & 0.0009 & 0.8125 & 0.0053 \\
         \textbf{PATTERN} & ACC & 0.8698 & 0.0001 & 0.0001 & 0.8689 & 0.0003 \\
         \textbf{CLUSTER} & ACC & 0.7819 & 0.0019 & 0.0004 & 0.7827 & 0.0007\\ \midrule
         \textbf{PascalVOC-SP} & F1 & 0.4912 & 0.0042 & 0.0019 & 0.5053 & 0.0084 \\
         \textbf{COCO-SP} & F1 & 0.4398 & 0.0033 & 0.0011 & 0.4446 & 0.0030 \\
         \textbf{Peptides-func} & AP & 0.7096 & 0.0078 & 0.0014 & 0.7145 & 0.0033 \\
         \textbf{Peptides-struct} & AP & 0.2463 & 0.0005 & 0.0004 & 0.2389 & 0.0021 \\
         \textbf{PCQM-Contact} & MRR & 0.4707 & 0.0007 & 0.0002 & 0.4743 & 0.0006 \\ \midrule
         \textbf{ogbg-molpcba} & AP & 0.3086 & 0.0031 & 0.0010 & 0.3160 & 0.0032 \\
         \textbf{ogbg-ppa} & ACC & 0.7888 & 0.0059 & 0.0004 & 0.7460 & 0.0058 \\
         \textbf{ogbg-code2} & F1 & 0.1957 & 0.0025 & 0.0005 & 0.1796 & 0.0031 \\ \midrule
         \textbf{roman-empire} & ACC & 0.9292 & 0.0036 & 0.0005 &  0.9310 & 0.0032 \\
         \textbf{amazon-ratings} & ACC & 0.5458 & 0.0036 & 0.0009 & 0.5491 &  0.0049 \\
         \textbf{minesweeper} & ROC AUC & 0.9782 & 0.0040 & 0.0003 & 0.9794 & 0.0047 \\
         \textbf{tolokers} & ROC AUC & 0.8556 & 0.0075 & 0.0010 & 0.8540 & 0.0096 \\
         \textbf{questions} & ROC AUC & 0.7852 & 0.0113 & 0.0009 & 0.7902 & 0.0086 \\
         \textbf{pokec} & ACC & 0.8646 & 0.0009 & 0.0001 & 0.8644 & 0.0003 \\
         \bottomrule
    \end{tabular}
    }
    \end{sc}
\end{table}

\end{document}